\RequirePackage[l2tabu, orthodox]{nag}
\documentclass[12pt]{article}

\usepackage[parfill]{parskip}
\usepackage[margin=1in]{geometry}
\usepackage[defaultlines=3,all]{nowidow}

\usepackage[T1]{fontenc}
\usepackage{microtype}
\usepackage[english]{babel}
\usepackage{libertine}             
\usepackage{titlesec}
\titleformat*{\section}{\Large\bfseries}

\usepackage{booktabs}
\usepackage{float}
\usepackage{graphicx}
\usepackage{makecell}
\usepackage{subcaption}
\usepackage{tablefootnote}
\usepackage{tabularx}
\usepackage{threeparttable}

\usepackage{csquotes}
\usepackage{enumitem}
\usepackage{pifont} 

\usepackage{amsmath}
\usepackage{amssymb}
\usepackage{amsthm}
\usepackage{bbm}
\usepackage{mathtools}
\usepackage{mathrsfs}

\usepackage[dvipsnames,svgnames]{xcolor}

\usepackage[sort]{natbib}

\colorlet{JournalViolet}{BlueViolet!80!black}

\usepackage{hyperref}
\hypersetup{colorlinks,linktoc=all}
\usepackage[all]{hypcap}
\hypersetup{citecolor=JournalViolet}
\hypersetup{linkcolor=red}
\hypersetup{urlcolor=MidnightBlue}

\usepackage[capitalize,noabbrev]{cleveref}
\usepackage{thmtools} 

\usepackage[affil-it]{authblk}

\usepackage{fancyhdr}
\fancypagestyle{FirstPage}{
\lfoot{\textcolor{DarkBlue}{\copyright June 2026, Rational Intelligence Lab, CISPA Helmholtz Center for Information Security, Saarbrücken, Germany. All Rights Reserved.}}
}

\theoremstyle{plain}
\newtheorem{theorem}{Theorem}[section]

\newtheorem{lemma}[theorem]{Lemma}

\theoremstyle{definition}

\newtheorem{assumption}[theorem]{Assumption}
\theoremstyle{remark}

\title{Off-Policy Evaluation with Strategic Agents\\
via Local Disclosure}

\author[1]{Kiet Q. H. Vo\footnote{Corresponding author: \texttt{huynh.vo@cispa.de}}}
\author[1,2]{Abbavaram Gowtham Reddy}
\author[1,4]{Julian Rodemann}
\author[3]{Siu Lun Chau}
\author[1]{Krikamol Muandet}

\affil[1]{\small{Rational Intelligence Lab, CISPA Helmholtz Center for Information Security, Saarbr\"ucken, Germany}}
\affil[2]{\small{Relational Machine Learning Lab, CISPA Helmholtz Center for Information Security, Saarbr\"ucken, Germany}}
\affil[3]{\small{Epistemic Intelligence \& Computation Lab, Nanyang Technological University, Singapore}}
\affil[4]{\small{Department of Statistics, LMU Munich, Germany}}

\date{} 

\newcommand{\base}[1]{{#1}^{b}}

\newcommand{\rec}[1]{{#1}^{r}}

\newcommand{\recset}{\rec{\mathcal{X}}}

\newcommand{\stra}[1]{{#1}^{s}}

\newcommand{\indep}{\perp \!\!\! \perp} 
\DeclareMathOperator*{\argmax}{arg\,max}

\DeclareMathOperator\erf{erf} 

\begin{document}
\maketitle
\thispagestyle{FirstPage}

\begin{abstract}
We study off-policy evaluation (OPE) under strategic behavior where decision subjects (or agents) respond to a decision maker's policy by strategically modifying their covariates. Such behavior induces a \textit{policy-dependent} covariate shift, breaking the standard assumption in existing methods that covariates are exogenous to the policy. Related work addresses this challenge by imposing strong assumptions such as repeated interactions or full knowledge of agents’ response behavior, substantially limiting its applicability to OPE. In contrast, we consider a one-shot OPE setting where the decision maker has only partial knowledge of the agents' response behavior. Our key insight is that disclosing local information through post-hoc explanations reveals agents’ pre-strategic covariates prior to adaptation, mitigating the information loss induced by strategic behavior. Leveraging this structure, we estimate a statistical model for the agents’ responses and construct a doubly robust estimator for policy value. By assuming that the agents' cost sensitivity follows a conditional log-normal distribution, we establish consistency of the proposed estimator and validate our approach empirically. More broadly, our results highlight how interaction design can mitigate information asymmetry by revealing otherwise hidden structure in agents' strategic responses.\textsuperscript{\dag}
\end{abstract}

\begingroup
\renewcommand{\thefootnote}{\dag}
\footnotetext{Accepted at ICML 2026.}
\endgroup

\section{Introduction}
The abundance of individual-level data has made it increasingly feasible for decision makers (DMs) to design and deploy personalized policies across a wide range of domains including healthcare \citep{murphy2003optimal,hamburg2010path}, education \citep{mandel2014offline}, lending \citep{kilbertus2020fair}, and recommendation systems \citep{joachims2021recommendations}. In these applications, particularly in high-stakes settings such as healthcare, deploying new policies directly is generally prohibitive, as poorly chosen decisions may lead to substantial harm, financial loss, or unfair outcomes. As a result, rather than evaluating through experimentation, policy performance must be assessed using historically collected data generated under a different decision policy. This problem is commonly referred to as an \textit{off-policy evaluation} (OPE) problem \citep{uehara2022review}.

When decisions are personalized, agents, i.e., individuals subject to the decision rule, may strategically modify their observable covariates to receive more favorable decisions. 
For instance, if college admissions policies place greater weight on standardized test scores (e.g., GRE), applicants may reallocate effort toward those tests \citep{vo2024causal}; similarly, when a bank introduces a new lending policy, customers might alter their financial behavior to meet eligibility criteria \citep{tsirtsis2020decisions}.
Such strategic responses induce a \textit{policy-dependent} shift in the distribution of agents' covariates: as the policy changes, so does the population it acts upon. This phenomenon is widely studied in strategic classification \citep{hardt2016strategic} and performative prediction \citep{perdomo2020performative}. Because policy performance is typically defined as an average over the induced population, a central insight from this literature is that evaluating a new policy requires \textit{anticipating how the covariate distribution responds to it}; failing to do so amounts to evaluating the policy under an incorrect induced population and thus estimating the wrong policy value. Nonetheless, this challenge is largely overlooked in OPE literature.

\begin{figure*}[t]
    \centering
    \begin{subfigure}{0.30\linewidth}
    \centering
    \includegraphics[width=\textwidth]{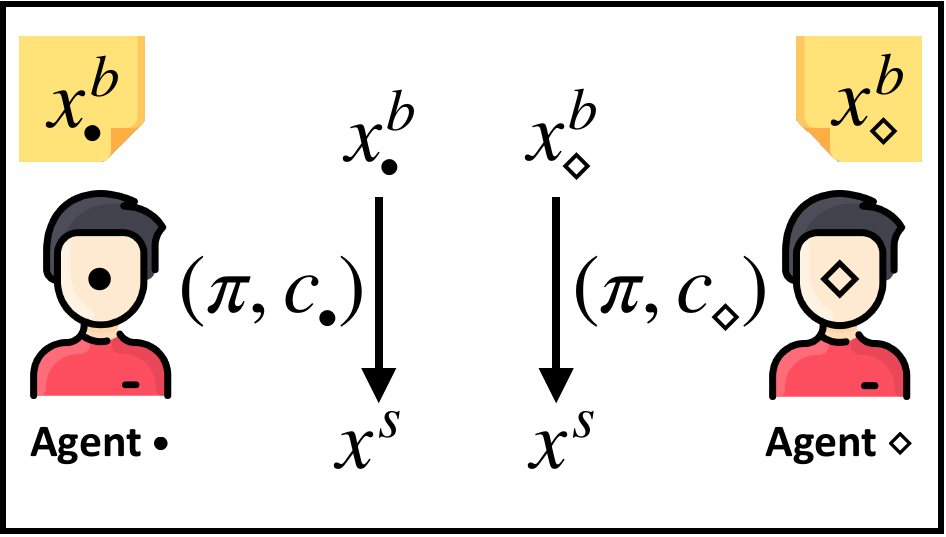}
    \caption{Two agents modify to the same covariate.}
    \label{fig:two-trajectories}
    \end{subfigure}
    \hfill
    \begin{subfigure}{0.68\linewidth}
    \centering
    \includegraphics[width=\textwidth]{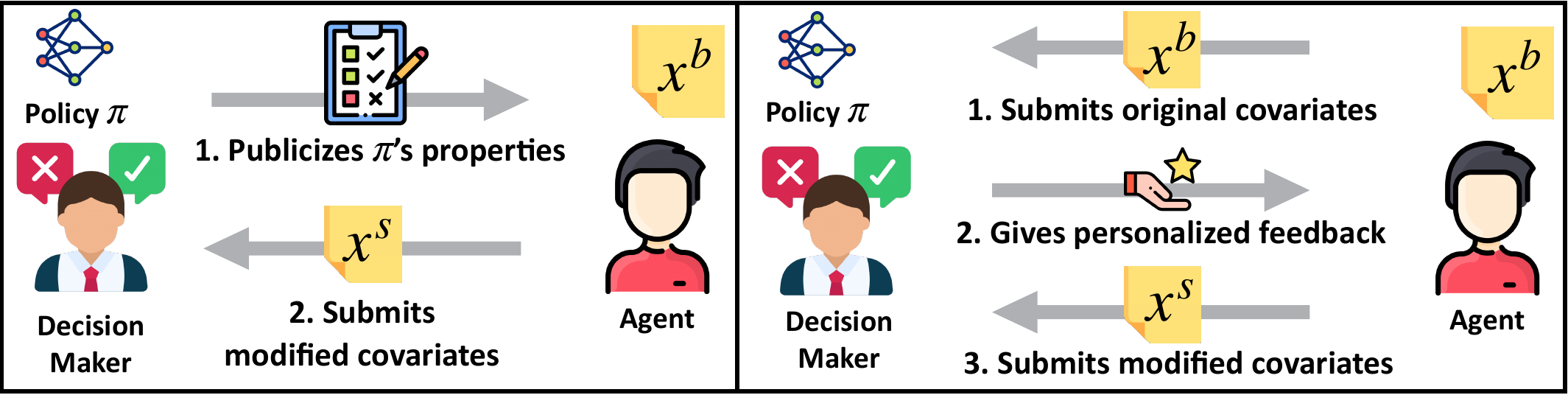}
    \caption{Global information disclosure (GID) vs. local information disclosure (LID).}
    \label{fig:global-local-disclosure}
    \end{subfigure}
    \caption{The left figure \emph{(a)} shows a situation where two agents---with two different pre-strategic covariates $\base{x}_{\bullet}, \base{x}_{\diamond}$ and cost functions $c_{\bullet}, c_{\diamond}$---adapt to the same covariate value $\stra{x}$. If the pre-strategic covariates are not observed, which happens in global information disclosure (i.e., right figure \emph{b}), these two agents are indistinguishable. This makes it harder for the DM to reason about the effect of another policy $\pi^\prime$ on the covariate shift. The right figure \emph{(b)} illustrates the interactions in GID (where the DM makes their policy's properties public) and LID (where the DM withholds disclosure until the agent gives information about themselves).}
    \label{fig:placeholder}
\end{figure*}

Prior work on strategic behavior in offline settings primarily originates from the strategic classification literature \citep{hardt2016strategic,levanon2021strategic,rosenfeld2024one}. To enable tractable analysis and equilibrium characterization, these works typically assume that agents' responses can be modeled precisely or that their behavior is homogeneous, commonly formalized through a \textit{single} and/or \textit{known} cost function governing covariate modification. While analytically convenient, such assumptions rarely hold in practice. Agents’ preferences and constraints are inherently heterogeneous, and are generally unobserved by the DM. Hence, applying these approaches as-is to OPE can substantially limit their practical usefulness, as OPE is most often employed to support real-world decision making.

Motivated by this, we study OPE under strategic behavior while relaxing the assumption of a single and known cost function. Specifically, we allow agents to have heterogeneous cost functions, while assuming that the DM knows only their common components and not the agent-specific costs.
This relaxation is possible when the DM employs local information disclosure (LID): disclosing partial information about their policy as a form of personalized feedback, for instance, through post-hoc explanations \citep{tsirtsis2020decisions,xie2024non,vo2026explanation}; see \Cref{fig:global-local-disclosure}. 

Crucially, this interaction scheme allows the DM to observe agents’ original covariates (or \textit{pre-strategic covariates}) before their strategic adaptation. 
Observing pre-strategic covariates is essential because different agents—with different baseline characteristics and behavioral models—may strategically modify to the same final covariate value. Without access to pre-strategic information, such trajectories are observationally indistinguishable, making it impossible to separate strategic modification from genuine baseline characteristics. \Cref{fig:two-trajectories} illustrates this.
By contrast, under global information disclosure (GID), where the DM makes policy information public and observes only post-adaptation covariates, they lose access to pre-strategic information, e.g., as in \citet{shavit2020causal,harris2022strategic,munro2025strategic,cohen2024bayesian}. While prior work has studied strategic behavior under LID \citep{tsirtsis2020decisions,xie2024non,vo2026explanation}, they mainly focus on online learning and equilibrium analysis. To our knowledge, no existing work leverages pre-strategic information for OPE with strategic agents, nor for one-shot learning with heterogeneous and unknown agents' behavior.

More broadly, this limitation of GID reflects a form of \textit{information asymmetry}: the DM observes agents only after strategic adaptation and therefore lacks access to key pre-adaptation information. 
Although LID is used as part of our problem setup, we emphasize that, practically, LID should be understood as a design choice in how the DM structures interactions with agents, rather than as a restrictive evaluation setup introduced solely to enable estimation. In many systems, the DM has discretion over whether policy information is revealed globally or through personalized feedback, and this choice fundamentally shapes agents’ strategic responses and what can be inferred from data.

Our work introduces this connection between interaction design and inferential structure as a novel perspective on OPE with heterogeneous and partially unknown agents’ behavior. From this perspective, our main research question is therefore twofold: (i) under local information disclosure, how should the disclosure rule be designed for OPE under strategic behavior, and (ii) how can the corresponding policy value be estimated from historical data? 

We summarize below our contributions:
\begin{itemize}
    \item As the first work to apply LID to OPE under strategic behavior, we extend the action recommendation-based explanation (ARex) \citep{vo2026explanation} and adapt its explanation rule to handle covariate shift in OPE (\Cref{lemma:unique-utility-maximiser}).
    \item We show that ARexes, as an instantiation of LID, can be used to infer the behavioral model of agents. Under some structural assumptions, we show that the estimator of unknown parameters is consistent (\Cref{theorem:consistent-theta}).
    \item We propose a doubly robust estimator that can adjust for the strategic covariate shift, and prove the estimator's consistency under standard conditions (\Cref{theorem:consistent-v-hat}).
\end{itemize}

All proofs are provided in the appendices.

\section{Strategic OPE under Local Disclosure}\label{sec:strategic-ope-under-ld}

\subsection{Problem Formulation}\label{sec:problem-formulation}

We use the lending scenario \citep{harris2022bayesian} as our running example and consider the setting in which a DM (e.g., a bank) interacts with a population of agents (e.g., their customers). Following prior work \citep{tsirtsis2020decisions,harris2022strategic,vo2024causal,vo2026explanation}, we assume agents interact with the DM independently of one another. Therefore, for ease of exposition, we describe the setup for a single agent and simply view a population of heterogeneous agents as concrete realizations of the same model.  In particular, we adopt the strategic agent setting from \citet{vo2026explanation} and describe it below.

Let \(\base{X} \sim P_{\base{X}}\) be an agent’s covariate vector and \(\base{x} \in \mathcal{X}\) an independent realization representing the agent’s observable attributes, such as existing debt or bank account balance.
At this stage, as the agent has not modified their covariates, we also refer to the base covariates $\base{x}$ as their pre-strategic covariates. We assume the covariate space $\mathcal{X}$ is discrete. This fits many real-world scenarios where the DM dictates which agents' information is collected and where continuous values are often discretized. For example, a bank may only record coarse, pre-specified attributes such as credit score buckets, income ranges, and debt-to-income ratio categories.

In the beginning, the DM commits to a decision policy $\pi:\mathcal{X}\to[0,1]$ that controls the probability of the agent getting a positive treatment (e.g., a loan application getting approved). Let $\base{T}\mid\base{x}\sim\text{Bernoulli}(\pi(\base{x}))$ be a binary random variable that represents the treatment assigned to this agent, with $\base{t}\in\mathcal{T}=\{0,1\}$ denotes the realization. We note that our formulation of the (potentially stochastic) treatment policy $\pi$ is consistent with prior work in OPE; see, e.g., \citet{uehara2020off,kallus2022doubly}. Moreover, this stochasticity is realistic in many situations, such as when the DM wants randomization for learning \citep{kilbertus2020fair,munro2025strategic,vo2024causal} or when it is a consequence of credit rationing \citep{stiglitz1981credit}.

\textbf{DM's personalized feedback.} If the agent receives a negative treatment ($\base{t}=0$), they are provided with personalized feedback in the form of an action recommendation-based explanation (ARex) \citep{vo2026explanation} and are allowed to modify their observable features $\base{x}$ before reapplying. We extend the ARex framework of \citet{vo2026explanation} to allow the agent to receive an explanation $e$ that contains $k \geq 2$ recommendations, i.e., $e=\{(\rec{x}_j),\pi(\rec{x}_j)\}_{j=1}^k$, rather than being restricted to only two recommendations. In our lending example, these recommendations can inform the agent to pay off more debt or increase the amount in their savings account. This is similar to the concept of algorithmic recourse \citep{karimi2021algorithmic,harris2022bayesian,konig2026performative}. In addition, we refer to this set of recommended feature updates as $\recset=\{\rec{x}_j\}_{j=1}^k$ and we use $\tau:(\base{x},\pi)\mapsto e$ to denote the explanation rule that generates ARexes. 
For a given choice of $k$, $\mathcal{E}=\mathcal{X}^k\times[0,1]^k$ denotes the space of ARexes and $E$ denotes the (random) explanation.

\textbf{Agent's adaptation.} Following \citet{tsirtsis2020decisions,vo2026explanation}, we model the agent's strategic modification of their covariate vector as
\begin{equation}\label{eq:best-response-model}
\stra{x} \in \argmax_{x\in\{\base{x}\}\cup\recset} \; \underbrace{\left\{\pi(x) - c\big(x,\base{x}\big)\right\}}_{u(\pi,c,x,\base{x})}
,
\end{equation}
where $c:\mathcal{X}\times\mathcal{X}\to\mathbb{R}_{\geq 0}$ denotes the cost function of this agent for modifying their base features $\base{x}$ to $x$.

While this choice-based behavioral assumption might appear strong, particularly when only a single recommendation is provided, our work mitigates this concern by allowing $k\geq2$ recommendations. Moreover, following \citet{vo2026explanation}, we assume that agents do not explore actions outside the presented choices set $\{\base{x}\}\cup\recset$, so as to avoid the risk of inadvertently reducing their utility. \Cref{apx:behavioral-model} further discusses this behavioral assumption. We denote $\mathcal{X}^f:=\{\base{x}\}\cup\recset$ as the set of feasible actions.

\textbf{Agent's cost model.}
We assume that the agent's cost function takes the form of $c(x,x^\prime):=\alpha d(x,x^\prime)$, 
where $d:\mathcal{X}\times\mathcal{X}\to\mathbb{R}_{\geq0}$ is a deterministic function measuring the primitive cost of modifying covariates $x$ to $x^\prime$. The function $d$ is shared across agents and known by the DM, while the scalar $\alpha\in(0,\infty)$ captures agent-specific cost sensitivity, unknown to the DM.
To model heterogeneity across agents, we assume $\alpha$ follows a log-normal distribution:
\begin{align}
&\ln\alpha\mid\base{x}\sim\mathcal{N}(\beta^\top\phi(\base{x})+\beta_0,\sigma^2),
\end{align}
where $\phi:\mathcal{X}\to\mathbb{R}^p$ denotes some known feature transformation function of the covariate vector $\base{x}$. Furthermore, the conditional CDF $F(\alpha\mid\base{x};\theta)$ is parameterized by $\theta=(\beta,\beta_0,\sigma)\in\Theta\subseteq\mathbb{R}^p\times\mathbb{R}\times\mathbb{R}^+$.

Intuitively, $d(x,x^\prime)$ captures the underlying burden of modifying covariates (e.g., time, effort, or monetary expenses), while $\alpha$ reflects how different agents internalize this burden. The log-normal model provides a tractable parameterization of heterogeneous cost sensitivities in the one-shot setting.\footnote{\Cref{apx:cost-model} provides further discussion of the shared primitive cost model and the log-normal assumption on agents’ cost sensitivities.}
We rewrite the agent's strategic adaptation as follows:
\begin{equation*}
\begin{aligned}
\stra{x} &\in\argmax_{x\in\mathcal{X}^f} \; \underbrace{\left\{\pi(x) - \alpha d\big(x,\base{x}\big)\right\}}_{u(\pi,\alpha,x,\base{x})}
\end{aligned}
\end{equation*}

\textbf{Agent's outcome.}
If the agent modifies their feature vector to a new value $\stra{x}\neq\base{x}$ and reports $\stra{x}$ to the DM, they receives the final treatment $\stra{t}\in\mathcal{T}=\{0,1\}$, modeled as  $\stra{T}\sim\text{Bernoulli}(\pi(\stra{x}))$. 
Then, the agent's outcome is realized as $y:=h\big(\stra{x},\stra{t},z\big)$ where $y\in\mathcal{Y}\subseteq\mathbb{R}$, $h:\mathcal{X}\times\mathcal{T}\times\mathcal{Z}\to\mathcal{Y}$ denotes the (non-random) outcome function, and $z\in\mathcal{Z}$ the unobserved noise factor. In our lending example, this outcome corresponds to the profit the bank makes from approving (i.e., $\stra{t}=1$) or rejecting (i.e., $\stra{t}=0$) this customer's loan application (i.e., $\stra{x}$), as some customers might be able to pay back the loan while others do not. In addition, we assume the random variable $Z$ represents exogenous noise that only affects the outcome $Y$. This captures external events that occur after the decision $\stra{t}$ is made, such as unexpected income changes, unforeseen expenses, or temporary economic slowdown.

On the other hand, the agent does not modify their base features $\base{x}$ if either they receive a positive treatment at the first try, i.e., $\base{t}=1$, or all recommended actions $\rec{x}_j\in\recset$ yield lower utility than retaining $\base{x}$, e.g., due to high modification costs \citep{vo2026explanation}. In these cases, we simply define $\stra{x}:=\base{x}$ and $\stra{t}:=\base{t}$. Consequently, their outcome is realized as $y:=h\big(\stra{x},\stra{t},z\big)=h\big(\base{x},\base{t},z\big)$.

\textbf{DM's objective.} We define the value of a policy $\pi$ as $V(\pi)=\mathbb{E}_\pi[Y]$, which, for instance, reflects the expected profit of the DM from a loan application. Given a logging policy $\pi_0$ and the respective observational data\footnote{Note that, in contrast, a dataset in GID does not contain observations of the pairs $(\base{x},\base{t})$.} $D=\{(\base{x}_i,\base{t}_i,\recset_i,\stra{x}_i,\stra{t}_i,y_i)\}_{i=1}^k$, the DM's goal is to estimate  $V(\pi)$ of a target policy $\pi$ via an estimator $\hat{V}(\pi,D)$.

\subsection{Policy Value Decomposition}\label{sec:value-decomposition}

The policy value $V(\pi)$ can be decomposed as follows:
\begin{align*}
V(\pi) = \mathbb{E}_\pi\left[Y\right]
 &= \sum_{\stra{t},\stra{x}} \mathbb{E}_\pi\left[Y \mid \stra{t},\stra{x}\right] p_\pi(\stra{t},\stra{x})
\\
&= \sum_{\stra{t},\stra{x}} \mathbb{E}_\pi\left[Y \mid \stra{t},\stra{x}\right] \left(\sum_{\base{t},\base{x}} p_\pi(\stra{t} \mid \stra{x},\base{t},\base{x}) p_\pi(\stra{x} \mid \base{t},\base{x}) p_\pi(\base{t},\base{x})\right)
\\
&= \sum_{\stra{t},\stra{x},\base{t},\base{x}} \underset{\text{\ding{172}}}{\mathbb{E}_\pi\left[Y \mid \stra{t},\stra{x}\right]} \underset{\text{\ding{173}}}{p_\pi(\stra{t} \mid \stra{x},\base{t},\base{x})} \underset{\text{\ding{174}}}{p_\pi(\stra{x} \mid \base{t},\base{x})} \underset{\text{\ding{175}}}{p_\pi(\base{t},\base{x})}.
\end{align*}
In what follows, we explain each term in detail.

\textbf{\ding{172} Conditional expected outcome $\mathbb{E}_\pi\left[Y \mid \stra{t},\stra{x}\right]$.} This term models the expected outcome conditioned on the strategically updated covariate $\stra{x}$ and treatment $\stra{t}$. To further simplify it, we make the following assumption.
\begin{assumption}[Unconfoundedness]\label{assumption:unconfoundedness}
The noise $Z$ and the treatment $\stra{T}$ are conditionally independent given the strategically updated covariate $\stra{X}$.
\end{assumption}

This is a standard assumption in OPE literature \citep{uehara2020off,kallus2022doubly} and trivially holds in our setting, by definition of $Z$. We note that in many real-world settings where the noise $Z$ is correlated with agents' features $\stra{X}$, unconfoundedness does not hold trivially under strategic behavior. This is because any intervention on $\stra{T}$ (by the DM) induces a corresponding intervention on $\stra{X}$ by the strategic agent, over which the DM has no control; see, e.g., \citet{munro2025strategic,vo2024causal}. Although strategic behavior introduces multiple challenges in OPE, our work focuses on the challenge of anticipating the covariate shift and assumes that the noise $Z$ is exogenous.

Under \Cref{assumption:unconfoundedness}, we can drop the subscript $\pi$ from the conditional expected outcome as $\mathbb{E}[Y \mid \stra{t},\stra{x}]$ becomes identifiable from observational data.

\textbf{\ding{173} Strategic propensity $p_\pi(\stra{t} \mid \stra{x},\base{t},\base{x})$.}
This term captures an agent's chance to receive the treatment $\stra{t}$, given a specific observation $\{\stra{x},\base{t},\base{b}\}$. It largely depends on the DM's policy $\pi$ and can be computed straightforwardly.

\textbf{\ding{174} Strategic covariate shift $p_\pi(\stra{x}\mid\base{t},\base{x})$.} This term captures the covariate shift arising from strategic behavior. We consider two cases of $\base{t}$. If $\base{t}=1$, then by definition, the agent does not update their features, thus $p_\pi(\stra{x}\mid\base{T}=1,\base{x})=\mathbbm{1}(\stra{x}=\base{x})$. In contrast, when $\base{t}=0$, we have
\begin{align}
\text{\ding{174}} = \sum_{e\in\mathcal{E}} p(\stra{x} \mid \base{T}=0, \base{x}, e) p_\pi(e \mid \base{x}).
\end{align}
Given that the agent might have multiple utility maximizers, we present the following result to ensure that there is only one maximizer, almost surely. This allows us to avoid making assumptions about how the agent might break ties.
\begin{lemma}[Unique utility maximizer]
\label{lemma:unique-utility-maximiser}
For any agent with $(\base{x},\base{t}=0)$ that receives recommendation set $\recset$ (coming from the ARex $e$), if it holds that $d(\rec{x}_\bullet,\base{x})\neq d(\rec{x}_\diamond,\base{x})$ for any pair $\rec{x}_\bullet\neq\rec{x}_\diamond$ in $\recset$, then
\begin{align}
p_{\alpha|\base{X}}(|\arg\max_{x\in\mathcal{X}^f}u(\pi,\alpha,x,\base{x})|=1\ \mid \ \base{x})=1.
\end{align}
\end{lemma}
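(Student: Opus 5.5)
The plan is to show that a tie among utility maximizers can only happen when $\alpha$ takes one of finitely many specific values, and that this event has probability zero because $\alpha\mid\base{x}$ is continuous. Fix $\base{x}$ with $\base{t}=0$ and the recommendation set $\recset$. I will treat $\recset$ as fixed given $\base{x}$, since the ARex $e=\tau(\base{x},\pi)$ is produced by the DM's rule and does not depend on $\alpha$. The feasible set $\mathcal{X}^f=\{\base{x}\}\cup\recset$ then has at most $k+1$ elements, and for each $x\in\mathcal{X}^f$ the map $\alpha\mapsto u(\pi,\alpha,x,\base{x})=\pi(x)-\alpha d(x,\base{x})$ is affine in $\alpha$.

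First I reduce the event of multiple maximizers to an event about pairs. If $|\arg\max_{x\in\mathcal{X}^f}u|\ge 2$, there is a pair of distinct $x_\bullet,x_\diamond\in\mathcal{X}^f$ with $u(\pi,\alpha,x_\bullet,\base{x})=u(\pi,\alpha,x_\diamond,\base{x})$. Hence this event is contained in the finite union, over unordered distinct pairs, of the sets
\[
A_{\bullet\diamond}=\bigl\{\alpha>0:\ \alpha\bigl(d(x_\bullet,\base{x})-d(x_\diamond,\base{x})\bigr)=\pi(x_\bullet)-\pi(x_\diamond)\bigr\}.
\]
Whenever $d(x_\bullet,\base{x})\neq d(x_\diamond,\base{x})$, the set $A_{\bullet\diamond}$ has at most one point, namely $\alpha^\ast=(\pi(x_\bullet)-\pi(x_\diamond))/(d(x_\bullet,\base{x})-d(x_\diamond,\base{x}))$.

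Second, I check that the coefficient of $\alpha$ is nonzero for every pair. For two recommendations this is exactly the hypothesis of the lemma. For a pair consisting of $\base{x}$ and some $\rec{x}_j\neq\base{x}$, I need $d(\rec{x}_j,\base{x})\neq d(\base{x},\base{x})$. I will use the natural convention for a primitive cost, namely $d(\base{x},\base{x})=0$ and $d(x,\base{x})>0$ for $x\neq\base{x}$; I will state this explicitly in the proof, since the statement does not spell it out. If some recommendation coincides with $\base{x}$, it simply merges with $\base{x}$ in $\mathcal{X}^f$ and produces no extra pair.

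Finally, conditional on $\base{x}$, $\ln\alpha$ is Gaussian with $\sigma>0$, so $\alpha$ has a density and assigns zero probability to any single point. By a union bound over at most $\binom{k+1}{2}$ pairs, the tie event has conditional probability zero, which gives the claim.

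The main obstacle is the pairs that involve $\base{x}$: the lemma's hypothesis only covers pairs of recommendations, so the argument depends on the positivity of $d$ off the diagonal together with $d(\base{x},\base{x})=0$. I would make these properties explicit, or argue that the explanation rule $\tau$ never recommends a point with zero modification cost.
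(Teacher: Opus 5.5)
Your proposal is correct and matches the paper's proof: each pairwise tie is an affine equation in $\alpha$ with nonzero slope, so the multiple-maximizer event is a finite set of $\alpha$ values, which has zero probability under the conditional log-normal law. You are in fact more careful than the paper, whose argument only treats ties between two recommendations in $\recset$. Your explicit convention $d(\base{x},\base{x})=0<d(x,\base{x})$ for $x\neq\base{x}$ is genuinely needed to rule out ties between $\base{x}$ and a recommendation. Without it, a recommendation $\rec{x}_j\neq\base{x}$ with $d(\rec{x}_j,\base{x})=d(\base{x},\base{x})$ and $\pi(\rec{x}_j)=\pi(\base{x})$ would have the same utility as $\base{x}$ for every $\alpha$, and the conclusion could fail.
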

\Cref{lemma:unique-utility-maximiser} says that when the DM provides recommendations to the agent, as long as there are no two feature updates with the same distance to $\base{x}$, then there is only a unique utility maximizer for this agent, almost surely. We use this result to further decompose $p(\stra{x} \mid \base{T}=0,\base{x}, e)$.

Let $\mathcal{X}^f_{-s}:=\mathcal{X}^f\setminus\{\stra{x}\}$ denote the set of feasible feature updates excluding the value $\stra{x}$. Then, we have
\begin{align*}
&p(\stra{x} \mid e,\base{T}=0,\base{x})
\\
&= p(\{\pi(\stra{x})-\alpha d(\stra{x},\base{x}) > \pi(x)-\alpha d(x,\base{x})\}\ \forall x\in\mathcal{X}^f_{-s}\ \lvert\ e,\base{T}=0,\base{x})\ \mathbbm{1}(\stra{x}\in\mathcal{X}^f)
\\
&= p(\{\pi(\stra{x})-\pi(x) > \alpha(d(\stra{x},\base{x})-d(x,\base{x}))\}\ \forall x\in\mathcal{X}^f_{-s}\ \lvert\ \base{x})\ \mathbbm{1}(\stra{x}\in\mathcal{X}^f),
\end{align*}
where the strict inequality follows because $\stra{x}$ is the unique utility maximiser a.s. and we can drop the conditioning variables $\{E,\base{T}\}$ because the rewritten expression contains only $\alpha$ as the source of randomness.
In the following, we write $\Delta_\pi(x,x^\prime):=\pi(x)-\pi(x^\prime)$ and $\Delta_d(x,x^\prime,x^{\prime\prime}):=d(x,x^{\prime\prime})-d(x^\prime,x^{\prime\prime})$ to simplify the notation. 

Next, we define the three complementary sets for $\mathcal{X}^f_{-s}$. Let $\mathcal{X}^f_{-}:=\big\{x:x\in\mathcal{X}^f_{-s} \ \&\ \Delta_d(\stra{x},x,\base{x}) < 0\big\}$ denote the subset of $\mathcal{X}^f_{-s}$ where the distances between its members to $\base{x}$ are larger than that of $\stra{x}$ to $\base{x}$. We define $\mathcal{X}^f_{+}$ and $\mathcal{X}^f_{0}$ analogously where the former corresponds to the case $\Delta_d(\stra{x},x,\base{x})>0$ and the latter $\Delta_d(\stra{x},x,\base{x})=0$.
We then define these three variables:
\begin{align*}
\delta^l &:= \max\Big\{0,\ \max_{x\in\mathcal{X}^f_{-}}\Big\{\frac{\Delta_\pi(\stra{x},x)}{\Delta_d(\stra{x},x,\base{x})}\Big\}\Big\},
\\
\delta^u &:= \min_{x\in\mathcal{X}^f_{+}}\Big\{\frac{\Delta_\pi(\stra{x},x)}{\Delta_d(\stra{x},x,\base{x})}\Big\},
\;\; \delta^0 := \min_{x\in\mathcal{X}^f_{0}}\; \Delta_\pi(\stra{x},x).
\end{align*}
If $\mathcal{X}^f_{-}$ and/or $\mathcal{X}^f_{+}$ are empty, we can set $\delta^l:=0$ and $\delta^u:=\infty$. If $\mathcal{X}^f_0$ is empty, we can equivalently set $\delta^0$ to some arbitrarily positive value. We can then rewrite $p(\stra{x} \mid e,\base{T}=0,\base{x})$ as
\begin{align*}
&p(\stra{x} \mid e,\base{T}=0,\base{x})
\\
&= p(\{\Delta_\pi(\stra{x},x) > \alpha\Delta_d(\stra{x},x,\base{x})\} \;\forall x\in\mathcal{X}^f_{-s}\ \lvert\ \base{x})\ \mathbbm{1}(\stra{x}\in\mathcal{X}^f)
\\
&= 
p\Big(\frac{\Delta_\pi(\stra{x},x)}{\Delta_d(\stra{x},x,\base{x})} > \alpha\ \forall x\in\mathcal{X}^f_{+}\ \&\ 
\frac{\Delta_\pi(\stra{x},x)}{\Delta_d(\stra{x},x,\base{x})} < \alpha\ \forall x\in\mathcal{X}^f_{-}\ \&\ 
\Delta_\pi(\stra{x},x) > 0\ \forall x\in\mathcal{X}^f_{0}\ \big\lvert\ \base{x}
\Big)\\
&\hspace{6mm}\mathbbm{1}(\stra{x}\in\mathcal{X}^f)
\\
&=
p\Big(\max_{x\in\mathcal{X}^f_{-}}\frac{\Delta_\pi(\stra{x},x)}{\Delta_d(\stra{x},x,\base{x})}
< \alpha 
< \min_{x\in\mathcal{X}^f_{+}}\frac{\Delta_\pi(\stra{x},x)}{\Delta_d(\stra{x},x,\base{x})}\ \&\ 
\min_{x\in\mathcal{X}^f_{0}} \Delta_\pi(\stra{x},x) > 0\ \big\lvert\ \base{x}
\Big)\ \mathbbm{1}(\stra{x}\in\mathcal{X}^f)
\\
&= p(\{\delta^l<\alpha<\delta^u\}\ \&\ \{\delta^0>0\} \mid \base{x})\ \mathbbm{1}(\stra{x}\in\mathcal{X}^f)
\\
&= p(\{\delta^l<\alpha<\delta^u\} \mid \base{x})\ \mathbbm{1}(\delta^0>0)\ \mathbbm{1}(\stra{x}\in\mathcal{X}^f),
\end{align*}
where the second to last equation follows from our definitions of $\delta^l,\delta^u,\delta^0$ and because $\alpha$ follows a log-normal distribution. Therefore, being able to evaluate the CDF of $\alpha$ allows the DM to estimate $p_\pi(\stra{x} \mid \base{t},\base{x})$ and anticipate the strategic covariate shift. 

\textbf{\ding{175} Non-strategic joint distribution $p_\pi(\base{t},\base{x})$.}
This term is not influenced by the agents' strategic behavior and can be computed straightforwardly. We discuss the estimation for $p(\base{x})$ in the next section.

The next subsection provides an estimation procedure for the unknown parameters $\theta=(\beta,\beta_0,\sigma)$ of the cost model. 

\subsection{Learning the Agents' Cost Model}\label{sec:learning-cost-model}

Given each observation of agents' strategic behavior, i.e., a tuple $(\stra{x},\recset,\base{t}=0,\base{x})$, we define the per-observation likelihood contribution as
\begin{align*}
f(\delta^l,\delta^u,\base{x};\theta) := p_\theta\Big(\delta^l_{(\stra{x},e,\base{x})}<\alpha<\delta^u_{(\stra{x},e,\base{x})}\ \big\lvert\ \base{x}\Big),
\end{align*}
if $\delta^l_{(\stra{x},e,\base{x})} < \delta^u_{(\stra{x},e,\base{x})}$. When $\delta^l_{(\stra{x},e,\base{x})} = \delta^u_{(\stra{x},e,\base{x})}$, we set $f(\delta^l,\delta^u,\base{x};\theta) := c$ for some constant $c\in(0,1)$.

Given $n$ observations $\{(\stra{x}_i,e_i,\base{t}_i=0,\base{x}_i)\}_{i=1}^n$ collected from agents who received negative base treatment, i.e, $\base{t}_i=0$, under the logging policy $\pi_0$, we define the empirical log-likelihood function as
\begin{align*}
\mathcal{Q}_n(\theta) &= \frac{1}{n}\sum_{i=1}^n\ln f(\delta^l_i,\delta^u_i,\base{x}_i;\theta),
\end{align*}
and denote $\hat\theta_n:=\arg\max_{\theta\in\Theta}\mathcal{Q}_n(\theta)$ as the maximum likelihood estimator of the agents' cost model. We then present the conditions for this estimator to be consistent.

\begin{assumption}[Compact parameter space]
\label{assumption:compactness}
The parameter space $\Theta\subset\mathbb{R}^{p}\times\mathbb{R}\times\mathbb{R}^+$ is compact.
\end{assumption}

\begin{assumption}[Finite covariates]
\label{assumption:finiteness}
The space of covariate vectors $\mathcal{X}\subset\mathbb{R}^d$ is finite.
\end{assumption}

\begin{assumption}[Weak positivity \& full-rank design matrix]
\label{assumption:weak-positivity}
There exists a subset $\base{\mathcal{X}}_\diamond\subseteq\mathcal{X}$ of size at least $p+1$ such that 
\begin{itemize}
    \item for each $\base{x}\in\base{\mathcal{X}}_\diamond$, there exist two positive values $\delta^{u1}\neq\delta^{u2}$ where the two observations $(0,\delta^{u1},\base{x})$ and $(0,\delta^{u2},\base{x})$ occur with positive probabilities, i.e.,
    \begin{align*}
    \begin{aligned}
    p\big(\delta^l=0,\delta^u =\delta^{u1},\base{X}=\base{x}\big) &> 0,
    \\
    p\big(\delta^l=0,\delta^u=\delta^{u2},\base{X}=\base{x}\big) &> 0;
    \end{aligned}
    \end{align*}
    
    \item the augmented design matrix $\tilde{\Phi}_{\base{x}}$ has full column rank, where we define
    \begin{align*}
    \tilde{\Phi}_{\base{x}} = \begin{bmatrix}
    1 & \phi(\base{x}_1)^\top\\
    \vdots & \vdots\\
    1 & \phi(\base{x}_{p+1})^\top
    \end{bmatrix}
    \quad \forall \base{x}_1, \ldots, \base{x}_{p+1} \in\base{\mathcal{X}}_\diamond.
    \end{align*}
\end{itemize}
\end{assumption}

The first part of \Cref{assumption:weak-positivity} imposes a weak positivity condition, requiring sufficient variation in the observed strategic responses. The second part is likewise mild, as it depends primarily on the marginal \(P_{\base{X}}\) and the mapping $\phi$.

\begin{theorem}[Consistency of $\hat\theta_n$]\label{theorem:consistent-theta} Let $\theta^*=(\beta^*,\beta_0^*,\sigma^*)$ be the true parameters of agents' cost model. Under Assumptions~\ref{assumption:compactness}--\ref{assumption:weak-positivity} and that the DM recommends covariate updates with different distances to $\base{x}$ (\Cref{lemma:unique-utility-maximiser}), $\hat\theta_n\xrightarrow{p}\theta^*$.
\end{theorem}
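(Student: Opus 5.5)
We would prove the claim with the standard consistency theorem for extremum estimators, e.g. Newey--McFadden Theorem 2.1. It needs four ingredients: (i) $\Theta$ is compact; (ii) the population criterion $\mathcal{Q}(\theta):=\mathbb{E}[\ln f(\delta^l,\delta^u,\base{X};\theta)]$ is continuous; (iii) $\sup_{\theta\in\Theta}|\mathcal{Q}_n(\theta)-\mathcal{Q}(\theta)|\xrightarrow{p}0$; (iv) $\mathcal{Q}$ is uniquely maximized at $\theta^*$.

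Ingredient (i) is \Cref{assumption:compactness}. For the others we first use finiteness. By \Cref{assumption:finiteness} and since the recommendation sets lie in $\mathcal{X}^k$, the triples $(\stra{x},e,\base{x})$ take finitely many values. Hence the observation $W=(\delta^l,\delta^u,\base{x})$ has finite support $\mathcal{W}$. \Cref{lemma:unique-utility-maximiser} guarantees that $\stra{x}$ is almost surely determined as the unique maximizer, so the likelihood formula of \Cref{sec:value-decomposition} is the correct model. Writing $\mu_\theta(\base{x})=\beta^\top\phi(\base{x})+\beta_0$, each contribution is explicit:
\begin{align*}
f(w;\theta)=\Phi\Big(\tfrac{\ln\delta^u-\mu_\theta(\base{x})}{\sigma}\Big)-\Phi\Big(\tfrac{\ln\delta^l-\mu_\theta(\base{x})}{\sigma}\Big),
\end{align*}
with the conventions $\ln 0=-\infty$ and $\ln\infty=\infty$. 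Degenerate cells take the constant $c$.

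\textbf{Continuity and uniform convergence.} When $\delta^l<\delta^u$, $f(w;\theta)$ is continuous in $\theta$ and strictly positive. On the compact set $\Theta$, where $\sigma$ is bounded away from $0$ and $\mu_\theta$ is bounded, it is therefore bounded below by some $\epsilon_w>0$. Thus $\ln f(w;\cdot)$ is continuous and bounded on $\Theta$ for each of the finitely many $w\in\mathcal{W}$. We can write
\begin{align*}
\mathcal{Q}_n(\theta)-\mathcal{Q}(\theta)=\sum_{w\in\mathcal{W}}(\hat p_n(w)-p(w))\ln f(w;\theta),
\end{align*}
where $\hat p_n$ is the empirical frequency of cell $w$. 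Therefore
\begin{align*}
\sup_\theta|\mathcal{Q}_n-\mathcal{Q}|\le \sum_{w}|\hat p_n(w)-p(w)|\,\sup_\theta|\ln f(w;\theta)|\xrightarrow{p}0
\end{align*}
by the weak law of large numbers, applied cell by cell. This also gives continuity of $\mathcal{Q}$, so (ii) and (iii) hold.

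\textbf{Identification (main obstacle).} By the Gibbs/KL inequality, $\mathcal{Q}(\theta^*)\ge\mathcal{Q}(\theta)$ for every $\theta$. Equality forces $f(w;\theta)=f(w;\theta^*)$ on every cell $w$ with positive probability. The careful step is that the observed data are interval-censored versions of $\alpha$. These are not a proper likelihood over a fixed partition, because the intervals $(\delta^l,\delta^u)$ vary with $e$. So we apply Gibbs cell by cell, conditioning on $(e,\base{x})$: given $(e,\base{x})$, the intervals over the choices of $\stra{x}$ partition $(0,\infty)$ up to null sets, and the contributions sum to one. This yields the equality $f(w;\theta)=f(w;\theta^*)$ on the support.

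Now apply \Cref{assumption:weak-positivity}. For each $\base{x}\in\base{\mathcal{X}}_\diamond$, the cells $(0,\delta^{u1},\base{x})$ and $(0,\delta^{u2},\base{x})$ give
\begin{align*}
\tfrac{\ln\delta^{uj}-\mu_\theta(\base{x})}{\sigma}=\tfrac{\ln\delta^{uj}-\mu_{\theta^*}(\base{x})}{\sigma^*},\qquad j=1,2,
\end{align*}
by strict monotonicity of $\Phi$. Subtracting the two equations and using $\delta^{u1}\neq\delta^{u2}$ gives $\sigma=\sigma^*$, and then $\mu_\theta(\base{x})=\mu_{\theta^*}(\base{x})$. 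Since this holds for all $p+1$ points of $\base{\mathcal{X}}_\diamond$, we get $\tilde\Phi_{\base{x}}\big((\beta_0,\beta)-(\beta_0^*,\beta^*)\big)=0$. Full column rank then gives $(\beta,\beta_0)=(\beta^*,\beta_0^*)$, which is (iv). All four conditions hold, so $\hat\theta_n\xrightarrow{p}\theta^*$.
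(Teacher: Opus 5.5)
Your proposal is correct and follows essentially the same route as the paper: Newey--McFadden Theorem 2.1, with compactness, continuity, uniform convergence, and identification through the information inequality, combined with the two-threshold and full-rank argument from \Cref{assumption:weak-positivity}. The differences are only in presentation. You obtain uniform convergence directly from the finite-cell decomposition instead of a dominance bound plus Newey--McFadden Lemma 2.4, and you phrase the information inequality as a Gibbs inequality conditional on $(e,\base{x})$, which makes explicit why the interval probabilities sum to one, rather than as strict Jensen applied to the likelihood ratio.
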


Based on these results, standard off-policy evaluation estimators can be derived. In the following section, we present a doubly robust estimator.

\section{Strategy-Robust Doubly Robust Estimator}
In this section, we introduce a doubly robust estimator for off-policy evaluation that explicitly accounts for strategic covariate shift. We refer to this estimator as the \textbf{strategy-robust doubly robust (SDR)} estimator:
\begin{align*}
\hat{V}_\text{SDR}(\pi) = \hat{V}_\text{S-IPS-res}(\pi) + \hat{V}_\text{S-DM}(\pi),
\end{align*}
where $\hat{V}_\text{S-IPS-res}$ refers to the (strategy-robust) inverse propensity score (IPS)-based estimate of the residual and $\hat{V}_\text{S-DM}$ the (strategy-robust) direct method estimator of the policy value. In particular, they correspond to
\begin{align*}
\hat{V}_\text{S-IPS-res}(\pi) &= \frac{1}{m}\sum_{i=1}^m\big(y_i-\hat{\mu}(\stra{t}_i,\stra{x}_i)\big) \frac{\hat{p}_\pi(\stra{t}_i,\stra{x}_i,\base{t}_i \mid \base{x}_i)}{\hat{p}_{\pi_0}(\stra{t}_i,\stra{x}_i,\base{t}_i \mid \base{x}_i)},
\\
\hat{V}_\text{S-DM}(\pi) &= \sum_{\mathcal{T}\times\mathcal{X}\times\mathcal{T}\times\mathcal{X}} \hat\mu(\stra{t},\stra{x}) \hat{p}_\pi(\stra{t},\stra{x},\base{t},\base{x}), \end{align*}
where $\hat\mu(\stra{t},\stra{x})$ denote the estimator of the conditional expected outcome $\mu(\stra{t},\stra{x}):=\mathbb{E}[Y \mid \stra{t},\stra{x}]$. The importance weights can also be defined as:
\begin{align*}
w(\stra{t},\stra{x},\base{t},\base{x}) &:= \frac{p_\pi(\stra{t},\stra{x},\base{t}|\base{x})}{p_{\pi_0}(\stra{t},\stra{x},\base{t} \mid \base{x})},
\end{align*}
whose empirical version $\hat{w}(\stra{t},\stra{x},\base{t},\base{x})$ can be obtained by replacing $p_\pi$ and $p_{\pi_0}$ with their empirical estimates $\hat{p}_\pi$ and $\hat{p}_{\pi_0}$, respectively.

\begin{assumption}[Overlap]\label{assumption:overlap}
Given the logging policy $\pi_0$ and the evaluation policy $\pi$, we assume that $p_{\pi}(\stra{t},\stra{x},\base{t}|\base{x})>0$ implies $p_{\pi_0}(\stra{t},\stra{x},\base{t}|\base{x})>0$, for all values $(\stra{t},\stra{x},\base{t},\base{x})\in\mathcal{T}\times\mathcal{X}\times\mathcal{T}\times\mathcal{X}$.
\end{assumption}

The density $\hat{p}(\base{x})$ can be estimated via empirical frequency, since the space $\mathcal{X}$ is finite. Under the i.i.d sampling assumption, this estimator is consistent by the law of large numbers. 
We next establish the double robustness property of $\hat{V}_\text{SDR}$: the estimator is consistent if either the outcome model $\hat{\mu}(\stra{x},\stra{t})$ is consistent or \Cref{assumption:overlap} holds.

\begin{theorem}[Consistency of $\hat{V}_\text{SDR}$]\label{theorem:consistent-v-hat}
Suppose that the estimator $\hat{p}_\pi(\stra{t},\stra{x},\base{t},\base{x})$ is consistent for any $\pi$ , the nuisance components (i.e., $\hat{w}$, $\hat{\mu}$) are estimated independently, and the samples for $\hat{V}_\text{S-IPS-res}$ are collected separately, then $\hat{V}_\text{SDR}$ is consistent if either the outcome model $\hat{\mu}(\stra{x},\stra{t})$ is consistent or \Cref{assumption:overlap} (overlap) holds.
\end{theorem}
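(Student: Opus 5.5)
The plan is the standard doubly robust decomposition, with the nuisance estimators treated as fixed by the sample-splitting hypothesis. Write $\bar\mu$ for the probability limit of $\hat\mu$ and $\bar w$ for that of $\hat w$. Consistency of $\hat p_\pi$ and $\hat p_{\pi_0}$ and the continuous mapping theorem give $\bar w = w$ wherever the ratio is well defined. The space $\mathcal{T}\times\mathcal{X}\times\mathcal{T}\times\mathcal{X}$ is finite by \Cref{assumption:finiteness}, so pointwise convergence is automatically uniform, and all sums over it are finite sums of convergent terms.

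\textbf{Step 1: the direct-method term.} Since $\hat V_\text{S-DM}(\pi)$ is a finite sum of products $\hat\mu\,\hat p_\pi$, it converges in probability to
\[
\sum \bar\mu(\stra{t},\stra{x})\,p_\pi(\stra{t},\stra{x},\base{t},\base{x}) .
\]

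\textbf{Step 2: the residual term.} The samples for $\hat V_\text{S-IPS-res}$ are independent of the data used to fit $\hat w$ and $\hat\mu$. Conditioning on the nuisances, the summands are therefore i.i.d. under $\pi_0$, and they are bounded because $\mathcal{Y}$ outcomes are integrable and the weights are finite on the support. A conditional weak law of large numbers then shows that $\hat V_\text{S-IPS-res}$ is within $o_p(1)$ of
\[
\mathbb{E}_{\pi_0}\big[(Y-\hat\mu(\stra{T},\stra{X}))\,\hat w\big].
\]
This expectation in turn converges to $\mathbb{E}_{\pi_0}[(Y-\bar\mu)\bar w]$. To evaluate it, write the $\pi_0$-law as $p_{\pi_0}(\stra{t},\stra{x},\base{t}\mid\base{x})\,p(\base{x})$. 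Then use \Cref{assumption:unconfoundedness}, which gives $\mathbb{E}_{\pi_0}[Y\mid \stra{t},\stra{x},\base{t},\base{x}]=\mu(\stra{t},\stra{x})$, policy-independently, because $Y=h(\stra{x},\stra{t},Z)$ with $Z$ exogenous. The limit becomes
\[
\sum (\mu-\bar\mu)(\stra{t},\stra{x})\,\bar w\, p_{\pi_0}(\stra{t},\stra{x},\base{t}\mid\base{x})\,p(\base{x}).
\]

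\textbf{Step 3: the two cases.} Adding the two limits, the target is $V(\pi)=\sum \mu\, p_\pi$ by the decomposition in \Cref{sec:value-decomposition}.
\begin{itemize}
\item If $\hat\mu$ is consistent, then $\bar\mu=\mu$. The residual limit vanishes regardless of $\bar w$ (bounded weights suffice), and the direct-method limit equals $V(\pi)$.
\item If \Cref{assumption:overlap} holds, then $\bar w=w$ is well defined wherever $p_\pi>0$, and $w\,p_{\pi_0}=p_\pi$ there. Terms with $p_\pi=0$ contribute zero in both pieces. The residual limit then equals $\sum(\mu-\bar\mu)p_\pi$, and adding the direct-method limit $\sum\bar\mu p_\pi$ gives $\sum\mu p_\pi=V(\pi)$.
\end{itemize}

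\textbf{Main obstacle.} The delicate point is Step 2: justifying the passage from the empirical average with estimated weights to the population limit. In particular, $\hat w$ must stay bounded in probability, meaning $\hat p_{\pi_0}$ stays away from zero on the support. I would handle this by using finiteness of $\mathcal{X}$: $p_{\pi_0}$ has a positive minimum on its support, so with probability tending to one $\hat p_{\pi_0}$ exceeds half that minimum. Independence of the evaluation samples from the nuisance fits lets the conditional law of large numbers apply without empirical-process arguments. The same step also needs the identification $\mathbb{E}[Y\mid\cdot]=\mu$ under both policies, which is exactly what \Cref{assumption:unconfoundedness} supplies.
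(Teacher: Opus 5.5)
Your proposal is correct and is essentially the paper's argument. The nuisances are held fixed by the separate samples, a law of large numbers handles the residual term, the continuous mapping theorem handles the direct-method term, $\mathbb{E}[Y-\mu(\stra{T},\stra{X})\mid \stra{T},\stra{X},\base{T},\base{X}]=0$ makes the residual vanish when $\hat\mu$ is consistent, and $w\,p_{\pi_0}=p_\pi$ does the work under overlap.

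The differences are organizational: you compute one joint limit in $(\bar\mu,\bar w)$ and then specialize, whereas the paper writes separate $A+B+C$ and $A+B+C+D$ decompositions for the two cases. You are also more careful in places, keeping $\hat p_{\pi_0}$ bounded away from zero and invoking exogeneity of $Z$ rather than just the definition of $\mu$. One small point: you do not need a probability limit $\bar\mu$ in the overlap case, because the $\hat\mu$-terms cancel against $w\,p_{\pi_0}=p_\pi$, so $\hat\mu=O_p(1)$ suffices (the paper simply takes $\hat\mu$ fixed).
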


\section{Experiments}

We conduct experiments on a synthetic dataset and a real-world dataset to verify our theoretical results. Firstly, we show that as the sample size increases, our estimates for (i) the parameters of the agents' cost model and for (ii) the policy value converge in probability towards the ground-truth, reflecting the consistency results in \Cref{theorem:consistent-theta,theorem:consistent-v-hat}. Secondly, we show that a standard doubly robust approach, without correct adjustment for the strategic covariate shift, will produce incorrect policy value estimates. This demonstrates the importance of correctly adjusting for the strategic covariate shift. Thirdly, we investigate a scenario where there are some agents who behave differently from our assumed behavioral model in \Cref{sec:problem-formulation}. This is to understand how such deviation from our behavioral assumption can bias the policy value estimates.

\subsection{Synthetic Data}\label{subsec:synthetic-exp}

We generate $N\geq1000$ agents with 2-dimensional observable feature vectors $\base{x}\in\mathcal{X}=\{-10,\ldots,10\}^2\subset\mathbb{Z}^2$.
Each agent has a cost function of the form $c(x,\base{x}):=0.05\times\alpha\|x-\base{x}\|_2^2$ where $\ln\alpha\sim\mathcal{N}(\beta^\top\phi(\base{x})+\beta_0,\ \sigma^2)$. The true parameters are $\beta^*=[1.0,1.2]^\top$, $\beta_0^*=0.5$, and $\sigma^*=1.0$. In addition, any strategic movement outside of the space $\mathcal{X}$ results in an infinite cost.
The outcome function is $h(\stra{x},\stra{t}):=([5, 5]^\top\stra{x})\stra{t}+5$. The DM aims to estimate the conditional expected outcome $\mathbb{E}[Y \mid \stra{t},\stra{x}]$ via a predictive model $\hat\mu:\mathcal{T}\times\mathcal{X}\to\mathcal{Y}$.

For a logistic function $g(a):=1/(1+\exp(-a))$, we consider the two logging policies $\pi_0^\text{strict}$ and $\pi_0^\text{lax}$ such that $\pi_0^\text{strict}(x) = g([4,4]^\top x)$ and 
$\pi_0^\text{lax}(x) = g([1,1]^\top x)$ where the $\pi_0^\text{strict}$ has stronger discriminative power (i.e., closer to being a deterministic policy). We use it to study the case where \Cref{assumption:overlap} (overlap) is violated. These two logging policies will later induce two different logged datasets $D_{\pi_0^\text{strict}}$ and $D_{\pi_0^\text{lax}}$. We use the former to evaluate our SDR estimator when there is limited overlap and the latter to evaluate when $\hat\mu$ is misspecified.

The DM uses a deterministic explanation policy $\tau$ such that for any agent with base covariates $\base{x}$, they receive $\recset=\{\base{x}+[0,1]^\top,\ \base{x}+[1,3]^\top,\ \base{x}+[1,4]^\top\}$.
The DM's goal is to estimate the value of the new treatment policy $\pi$ where $\pi(x)=g([1,1]^\top x-1)$.

\textbf{Baseline \& evaluation.} We use our SDR estimator, with \emph{mis-specified} cost model's parameters $\theta$, as the baseline (i.e., wrong adjustment for the covariate shift). In particular, we set the parameters as $\beta=[1.5, 0.8]^\top$, $\beta_0=0.2$, and $\sigma=0.7$.
We repeat the experiment multiple times, while increasing the number of agents $N$ from $1000$ to $11000$ with a step size of $500$. For each $N$, we randomly generate a logged dataset of size $N$, then estimate the parameters of agents' cost model using the maximum likelihood approach and compute the SDR estimator. We repeat $30$ times for each choice of $N$ to collect $30$ noisy estimates.

\begin{figure}[t!]
\centering
\includegraphics[width=0.75\columnwidth]{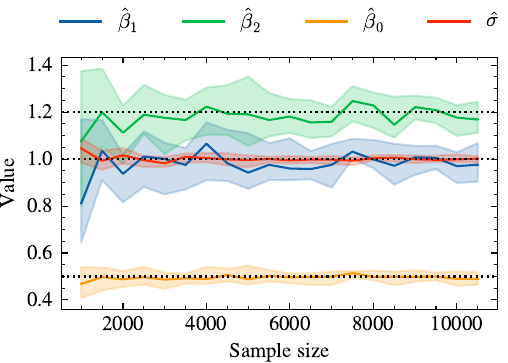}
\caption{The plots show the consistency of the estimated parameters of the cost model (on the synthetic dataset). Each line shows the median of the noisy estimates. Each shaded region captures the estimates between $25\%-75\%$ quantiles.}
\label{fig:consistency-params}
\end{figure}

\begin{figure*}[t!]
\centering
    \begin{subfigure}[t]{0.28\linewidth}
    \centering
    \includegraphics[width=\textwidth]{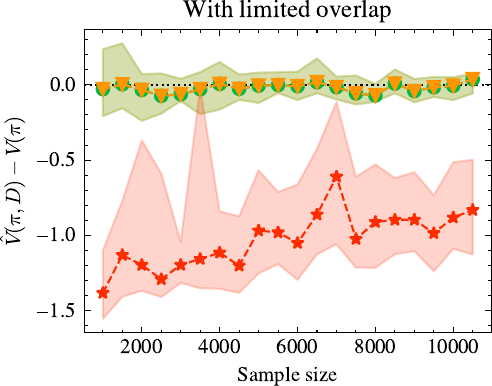}
    \caption{Despite limited overlap, our SDR estimator remains consistent, whereas the IPS-based estimator fails.}
    \end{subfigure}
    \hfill
    \begin{subfigure}[t]{0.28\linewidth}
    \centering
    \includegraphics[width=\textwidth]{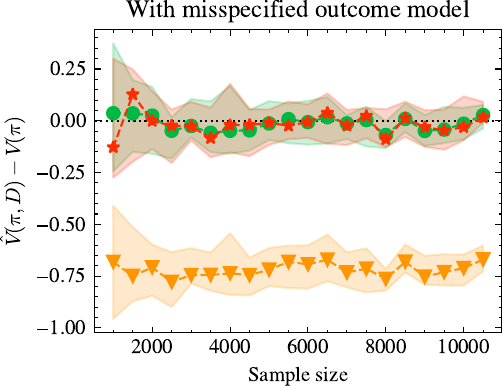}
    \caption{When $\hat\mu$ is misspecified, our SDR estimator remains consistent, whereas the DM estimator fails.}
    \end{subfigure}
    \hfill
    \begin{subfigure}[t]{0.395\linewidth}
    \centering
    \includegraphics[width=\textwidth]{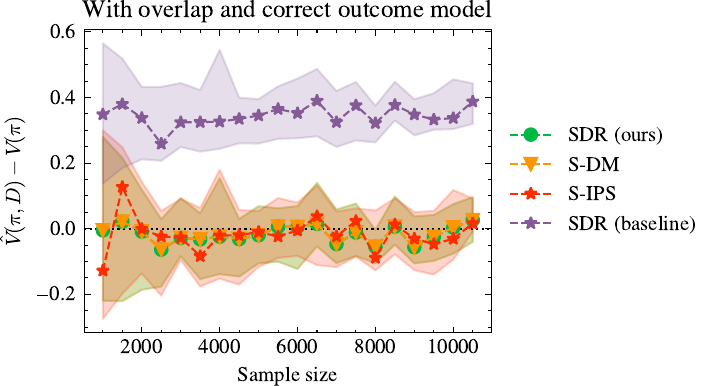}
    \caption{The baseline SDR approach produces incorrect estimates when it assumes wrong agents' strategic behavior.}
    \end{subfigure}
    \caption{Illustrations of the differences between the estimates for policy value and the ground-truth (on the synthetic dataset). Each line shows the median of the errors of the noisy estimates. Each shaded region captures the errors within the $25\%-75\%$ quantiles.}
    \label{fig:consistency-sdr}
\end{figure*}

\textbf{Result 1: All agents behave rationally.}
\Cref{fig:consistency-params} shows the convergence of the estimates for the log-normal distribution of $\alpha$. The shrinkage of shaded regions implies the concentration of the noisy estimates around the ground-truth values, as the dataset size increases. This demonstrates the consistency of the estimated parameter vector $\hat\theta$.
%
\Cref{fig:consistency-sdr} shows the convergence of our SDR estimator in several cases. Similarly, the shrinkage of the shaded regions shows the concentration of the noisy estimates around the ground-truth. This demonstrates the consistency of our SDR estimator, while the baseline produces incorrect estimates due to wrong assumption about agents' behavior.

\begin{figure}[t!]
\centering
    \begin{subfigure}[t]{0.49\linewidth}
    \centering
    \includegraphics[width=0.95\textwidth]{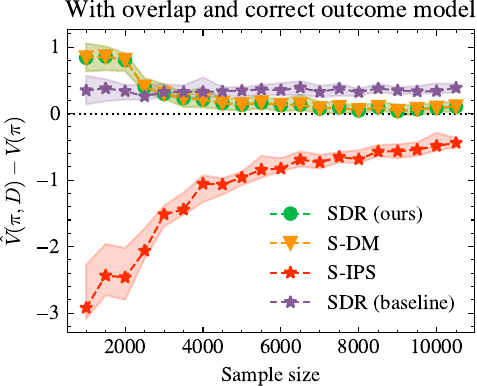}
    \end{subfigure}
    \begin{subfigure}[t]{0.49\linewidth}
    \centering
    \includegraphics[width=0.95\textwidth]{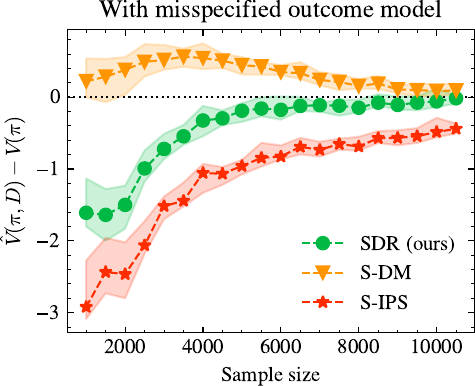}
    \end{subfigure}
    \caption{Similar to \Cref{fig:consistency-sdr}, these plots show the errors of the noisy policy-value estimates, but when there are $1000$ agents who behave irrationally (on the synthetic dataset). This demonstrates that although such irrational behaviour biases our estimates, as we obtain more samples from rational agents, the bias tends to zero.}
    \label{fig:result-on-irrationality-in-synthetic}
\end{figure}

\textbf{Result 2: Some agents behave irrationally.} We also test the robustness of our approach when agents behave differently from our assumed model in \Cref{sec:problem-formulation}. In particular, we fix the number of ``irrational'' agents to 1000 and observe how the bias introduced by such irrationality goes down as the dataset size increases. For any agent who fails to modify their base covariate vector $\base{x}$ to any recommended value $\rec{x}$, this agent may choose an arbitrary value $x$ drawn uniformly from the covariate space $\mathcal{X}$. 
\Cref{fig:result-on-irrationality-in-synthetic} shows the convergence of our SDR estimator when there are $1000$ irrational agents. As the dataset size increases, our estimator converges as the number of samples from rational agents dominate.
Since, in our LID setting, the DM can distinguish between agents' rational and irrational responses (i.e., by checking if $\stra{x}\in\mathcal{X}^f$), our MLE procedure (for estimating the agents' cost model) simply discards these data points from irrational agents.
This irrational behaviour is correlated with the cost sensitivity of agents, hence creating selection bias when the data is discarded.

\subsection{German Credit Data}

We preprocess the German credit dataset \citep{statlog_(german_credit_data)_144} following \citet{xie2024non}. In particular, we exclude two sensitive attributes and retain 18 features, among which 8 are considered modifiable by strategic agents. The original dataset contains $1000$ samples. We use CTGAN \citep{xu2019modeling} to generate $200000$ additional samples for our experiments.
We use covariates from the German Credit dataset and simulate agents’ strategic behavior and outcomes according to our structural model.

\textbf{The outcome model.} Although the German Credit dataset provides a binary credit-risk label, our target outcome is the bank’s realized profit, which is naturally continuous-valued. Accordingly, we define the synthetic outcome function as
$Y:=10((\vec{1})^\top \stra{X})T+5$, where the first term represents treatment-dependent loan profit and the constant term represents a processing fee. We evaluate the SDR estimator under both correctly specified and misspecified outcome model $\hat\mu$. In particular, a correctly specified model has the multiplicative form $\hat\mu(\stra{t},\stra{x})=\zeta_1^\top\stra{x}+\zeta_2^\top\stra{t}+\zeta_3((\vec{1})^\top \stra{x})t+\zeta_0$, while the misspecified model omits the interaction term.

\textbf{DM-agents interactions.} To set up the logging policy $\pi_0$ and evaluation policy $\pi$, we first fit a logistic regression model on the original German Credit binary labels and use the fitted model as a base scoring function. We then scale its coefficients to obtain policies with different levels of selectiveness. Suppose that the base policy is
$\pi_{\text{base};\eta}(x) = 1/(1+\exp(-(\eta_0+\eta_1^\top x))$
where $\eta=(\eta_0,\eta_1)$. Then, the logging and evaluation policies are respectively $\pi_0:=\pi_{\text{base};0.1\eta}$ and $\pi:=\pi_{\text{base};2\eta}$.

To operationalize ARexes, we let the DM generate all possible candidates for feature updates $\rec{x}$ whose Hamming distance (to the base covariate vector) is equal to one, i.e., $d_\text{Hamming}(\rec{x},\base{x})=1$. Then, for each agent, the DM gives top 3 recommendations $\rec{x}$ with the highest weighted scores $\pi(\rec{x})/d(\rec{x},\base{x})$. Such recommendations aim to balance the benefit and the (base) effort necessary for an agent to adapt. For example, a bank might recommend a change in a customer's profile that benefits the customer without incurring high cost for them.

\textbf{The agents' behavioral model.} Similar to \citet{vo2026explanation}, we set up an agent's cost function as
\begin{align*}
c(x,x^\prime) 
= \alpha d(x,x^\prime)
= \alpha 0.001\sum_{i\in\mathcal{I}}\frac{|x_i-\base{x}_i|}{|x_i^U-x_i^L|},
\end{align*}
where $\mathcal{I}$ is the set of indices of the 8 modifiable features and $[x_i^U,x_i^L]$ denotes the valid range of a feature $x_i$. Any change in non-modifiable features incurs infinite cost.

Recall that $\alpha\sim\mathcal{N}(\beta^\top\phi(\base{x})+\beta_0,\ \sigma^2)$. We apply PCA on the semi-synthetic dataset containing the agents' base covariate vectors $\base{x}$, extract the top 4 principle components, and use them to construct a PCA transformation $\phi:\mathcal{X}\to\mathbb{R}^4$. We set the true parameters to $\beta^*=[0.25, 0.25, 0.25, 0.25]^\top$, $\beta_0^*=0.5$, and $\sigma^*=1.0$.

\textbf{Estimation \& evaluation.} To account for continuous features, we use Monte Carlo approximation to compute the SDR (policy-value) estimate. In particular, we do not fit a model to predict $p(\base{x})$, but compute only $\hat{p}(\stra{t},\stra{x},\base{t}|\base{x})$ and use it to weight the data points when computing the SDR estimate. We use our SDR estimator, with \emph{mis-specified} cost model's parameters $\theta$, as the baseline. In particular, we set the cost model's parameters as $\beta=[0.225, 0.225, 0.225, 0.225]^\top$, $\beta_0=0.2$, and $\sigma=0.7$.
We repeat the experiment multiple times, while increasing the number of agents $N$ from $9500$ to $190000$ with a step size of $9500$. Similar to the synthetic data case, for each $N$, we repeat the experiment $30$ times to obtain $30$ noisy estimates.

\begin{figure*}[t!]
\centering
    \begin{subfigure}[t]{0.32\linewidth}
    \centering
    \includegraphics[width=\textwidth]{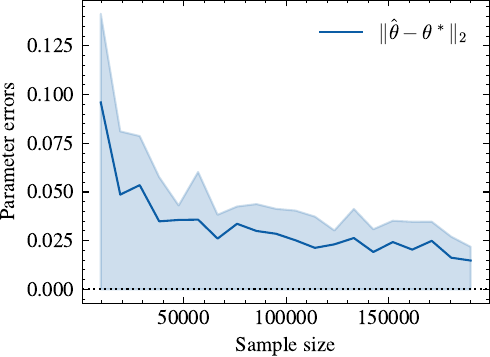}
    \caption{The errors of our estimates $\hat\theta$ decay as the sample size increases.}
    \end{subfigure}
    \hfill
    \begin{subfigure}[t]{0.324\linewidth}
    \centering
    \includegraphics[width=\textwidth]{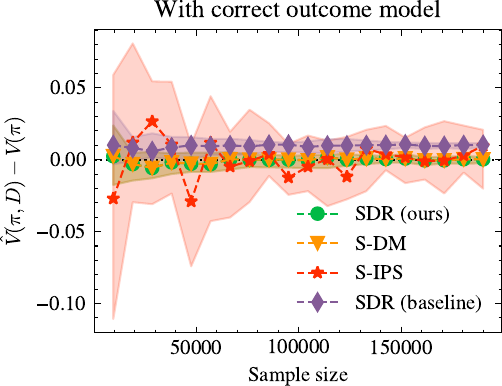}
    \caption{Baseline SDR gives incorrect estimates under the wrong behavioral assumption.}
    \end{subfigure}
    \hfill
    \begin{subfigure}[t]{0.32\linewidth}
    \centering
    \includegraphics[width=\textwidth]{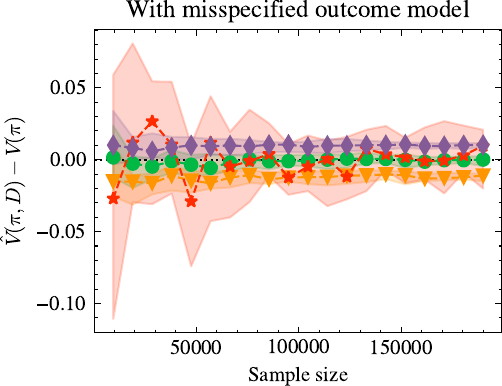}
    \caption{Our SDR estimator is consistent when $\hat\mu$ is misspecified, unlike the DM estimator.}
    \end{subfigure}
    \caption{These plots illustrate the convergence of our estimators on the German credit dataset. Each line shows the median of the errors of the noisy estimates. Each shaded region captures the errors within the $25\%-75\%$ quantiles.}
    \label{fig:consistency-results-on-german}

\end{figure*}

\Cref{fig:consistency-results-on-german} shows the convergence of our estimate $\hat\theta$ of the log-normal distribution's parameters and the convergence of our SDR estimator. The shrinkage of shaded regions implies the concentration of the noisy estimates around the ground-truth values, as the dataset size increases.

\section{Related Work}

\textbf{(Off-)policy evaluation and policy learning.}
Existing work on OPE under covariate shift focuses on \textit{exogenous} shifts, where changes in the covariate distribution do not arise in response to the policy being evaluated. For example, \citet{uehara2020off,guo2024distributionally} study settings in which the test distribution of covariates is assumed to be known, while \citet{kallus2022doubly} assumes the test distribution of covariates falls within a pre-specified set. These assumptions allow the evaluation problem to condition on a fixed or externally specified target distribution. In contrast, when covariate shifts are policy-dependent, the test distribution varies with the policy being evaluated and is therefore neither known nor constrained to a policy-invariant set. As a result, existing approaches, which rely on exogenous specification of the target distribution, are not suitable in this setting.
Another line of work studies optimizing decision policies and typically deals with strategic behavior through repeated interactions \citep{perdomo2020performative,munro2025strategic,chen2024practical,perdomo2025revisiting}. While leveraging repeated online interactions can help adapt policies or infer agents’ responses over time, this is often infeasible in high-stakes settings where experiments are costly or ethically constrained. This makes them unsuitable for OPE.

\textbf{Strategic machine learning.}
A prominent line of work that examines strategic behavior comes from strategic classification literature and its variants. While many of them rely on repeated online interactions \citep{shavit2020causal,harris2022strategic,horowitz2023causal,vo2024causal,xie2024non}, several considers offline settings \citep{hardt2016strategic,levanon2021strategic,rosenfeld2024one}.
However, these works in offline settings typically assume that agents’ responses to policy changes can be modeled precisely,
commonly formalized through a \textit{single} and \textit{known} cost function. This strong assumption limits the applicability of their approaches to OPE. Closest to our work in the spirit of relaxing such strong assumption is the work of \citet{rosenfeld2024one}, which considers a one-shot setting without knowing the exact agents' cost function. However, there are two main distinctions. Their approach relies on an externally specified set of feasible cost functions, which cannot capture the heterogeneity of agents' behavior. Secondly, their focus is on optimizing the classifier for the worst-case scenario while our focus is on estimating the performance of a policy.

\textbf{Contract design.} The information loss induced by global disclosure is closely related to classical problems of information asymmetry studied in economics, such as adverse selection in contract design \citep{rothschild1976equilibrium,bolton2004contract}. While our setting differs substantially from standard economic models in both objectives and setup, a common insight is that the principal’s choice of how interactions are structured plays a central role in mitigating information asymmetry. In particular, screening models \citep{baron1982regulating} emphasize that a principal can deliberately design an interaction scheme so that agents’ responses reveal information that would otherwise remain unobserved. Our perspective draws on this high-level idea: local information disclosure represents a design choice that structures interactions so that agents' pre-strategic covariates are observed to the DM, thereby preserving information that is lost under global disclosure.

\textbf{Recourse and explanation design.}
Our use of the action recommendation-based explanation (ARex) framework \citep{vo2026explanation} as a {local disclosure} mechanism is related to the algorithmic recourse and counterfactual explanations literature, which studies how decision makers can provide individuals with actionable feedback to achieve a desired decision outcome (see Section~\ref{sec:strategic-ope-under-ld} and~\citet{karimi2021algorithmic,wachter2018counterfactual}).
Several works model how releasing such feedback can and induce strategic responses, creating feedback loops between the decision rule and the observed covariates~\citep{tsirtsis2020decisions} and potentially rendering explanations \emph{performative}~\citep{konig2026performative}.
Recent work also stresses that recourse should be \emph{set-valued}: offering multiple counterfactuals can better heterogeneous user preferences~\citep{mothilal2020explaining}.

\section{Conclusion and Discussion}
When agents behave strategically, it gives rise to a policy-dependent covariate shift, affecting the existing OPE approaches that rely on a fixed or externally specified covariate distribution. While agents' responses can be anticipated precisely with full knowledge of their behavioral model, such an assumption rarely holds in practice. Our work is not only among the first to examine the problem of OPE under strategic behavior, but also proposes an approach that does not assume full knowledge of the agents' behavioral model, unlike much of related work in strategic machine learning.

In summary, we extend the ARex framework \citep{vo2026explanation} specifically to the task of OPE under strategic behavior. We then propose an estimation procedure to learn the parameters of agents' cost model. Finally, we construct a strategy-robust doubly robust estimator and prove its consistency.
Beyond these technical contributions, our work draws attention to the issue of information asymmetry when GID is assumed in the presence of strategic behaviour. When agents modify their features strategically in response to the DM's policy, two key challenges emerge.

The first challenge is the \textit{policy-dependent} covariate shift, which the DM must account for to accurately estimate policy performance. In many applications, LID is a design choice that can be leveraged to elicit additional information from strategic agents. We demonstrate this using ARexes. More broadly, this suggests a new approach for mitigating information asymmetry in learning problems, enabling the DM to relax strong assumptions about agent behavior.

The second challenge is the breakdown of unconfoundedness, a standard assumption in OPE. While treatment assignment $T$ may be unconfounded with the outcome $Y$ given covariates $X$ in non-strategic settings, this assumption can fail under strategic behavior (see \Cref{sec:value-decomposition}). Consequently, $\mathbb{E}[Y \mid t,x]$ may no longer be identifiable. Although not the main focus, our framework can be extended to infer latent variables governing strategic adaptations $\stra{X}$, analogous to the approach in \Cref{sec:learning-cost-model}. Conditioning on such latent information may help block confounding paths between $\stra{X}$ and $Y$. Crucially, this becomes possible through the use of LID to reveal additional information about agents.

\section*{Acknowledgments} 
We sincerely thank the members of the Rational Intelligence (RI) Lab, including Anurag Singh, Joseph Sheils, and Monseej Purkayastha for their insightful discussions, constructive feedback, and invaluable contributions to this work. We also thank the anonymous reviewers for their valuable feedback to improve our work. Kiet Q. H. Vo is a doctoral candidate at Saarland University.

\newpage
\bibliographystyle{alp}
\bibliography{components/bib}

@article{baron1982regulating,
  title   = {Regulating a Monopolist with Unknown Costs},
  author  = {Baron, David P. and Myerson, Roger B.},
  journal = {Econometrica},
  volume  = {50},
  number  = {4},
  pages   = {911--930},
  year    = {1982}
}

@book{bolton2004contract,
  title={Contract theory},
  author={Bolton, Patrick and Dewatripont, Mathias},
  year={2004},
  publisher={MIT press}
}

@article{chen2024practical,
  title={Practical Performative Policy Learning with Strategic Agents},
  author={Chen, Qianyi and Chen, Ying and Li, Bo},
  journal={arXiv preprint arXiv:2412.01344},
  year={2024}
}

@article{cohen2024bayesian,
  title={Bayesian strategic classification},
  author={Cohen, Lee and Sharifi-Malvajerdi, Saeed and Stang, Kevin and Vakilian, Ali and Ziani, Juba},
  journal={Advances in Neural Information Processing Systems},
  volume={37},
  pages={111649--111678},
  year={2024}
}

@inproceedings{ebrahimi2025double,
  title={The double-edged sword of behavioral responses in strategic classification: Theory and user studies},
  author={Ebrahimi, Raman and Vaccaro, Kristen and Naghizadeh, Parinaz},
  booktitle={Proceedings of the 2025 ACM Conference on Fairness, Accountability, and Transparency},
  pages={868--886},
  year={2025}
}

@article{guo2024distributionally,
title={Distributionally Robust Policy Evaluation under General Covariate Shift in Contextual Bandits},
author={Yihong Guo and Hao Liu and Yisong Yue and Anqi Liu},
journal={Transactions on Machine Learning Research},
issn={2835-8856},
year={2024},
url={https://openreview.net/forum?id=R7PReNELww},
note={}
}

@article{hamburg2010path,
  title={The path to personalized medicine},
  author={Hamburg, Margaret A and Collins, Francis S},
  journal={New England Journal of Medicine},
  volume={363},
  number={4},
  pages={301--304},
  year={2010},
  publisher={Mass Medical Soc}
}

@inproceedings{hardt2016strategic,
  title={Strategic classification},
  author={Hardt, Moritz and Megiddo, Nimrod and Papadimitriou, Christos and Wootters, Mary},
  booktitle={Proceedings of the 2016 ACM conference on innovations in theoretical computer science},
  pages={111--122},
  year={2016}
}

@article{harris2022bayesian,
  title={Bayesian persuasion for algorithmic recourse},
  author={Harris, Keegan and Chen, Valerie and Kim, Joon and Talwalkar, Ameet and Heidari, Hoda and Wu, Steven Z},
  journal={Advances in Neural Information Processing Systems},
  volume={35},
  pages={11131--11144},
  year={2022}
}

@inproceedings{harris2022strategic,
  title={Strategic instrumental variable regression: Recovering causal relationships from strategic responses},
  author={Harris, Keegan and Ngo, Dung Daniel T and Stapleton, Logan and Heidari, Hoda and Wu, Steven},
  booktitle={International Conference on Machine Learning},
  pages={8502--8522},
  year={2022},
  organization={PMLR}
}

@inproceedings{horowitz2023causal,
  title={Causal strategic classification: A tale of two shifts},
  author={Horowitz, Guy and Rosenfeld, Nir},
  booktitle={International Conference on Machine Learning},
  pages={13233--13253},
  year={2023},
  organization={PMLR}
}

@article{joachims2021recommendations,
  title={Recommendations as treatments},
  author={Joachims, Thorsten and London, Ben and Su, Yi and Swaminathan, Adith and Wang, Lequn},
  journal={AI Magazine},
  volume={42},
  number={3},
  pages={19--30},
  year={2021}
}

@inproceedings{kallus2022doubly,
  title={Doubly robust distributionally robust off-policy evaluation and learning},
  author={Kallus, Nathan and Mao, Xiaojie and Wang, Kaiwen and Zhou, Zhengyuan},
  booktitle={International Conference on Machine Learning},
  pages={10598--10632},
  year={2022},
  organization={PMLR}
}

@inproceedings{karimi2021algorithmic,
  title={Algorithmic recourse: from counterfactual explanations to interventions},
  author={Karimi, Amir-Hossein and Sch{\"o}lkopf, Bernhard and Valera, Isabel},
  booktitle={Proceedings of the 2021 ACM conference on fairness, accountability, and transparency},
  pages={353--362},
  year={2021}
}

@inproceedings{kilbertus2020fair,
  title={Fair decisions despite imperfect predictions},
  author={Kilbertus, Niki and Rodriguez, Manuel Gomez and Sch{\"o}lkopf, Bernhard and Muandet, Krikamol and Valera, Isabel},
  booktitle={International Conference on Artificial Intelligence and Statistics},
  pages={277--287},
  year={2020},
  organization={PMLR}
}

@article{konig2026performative,
  title={Performative validity of recourse explanations},
  author={K{\"o}nig, Gunnar and Fokkema, Hidde and Freiesleben, Timo and Mendler-D{\"u}nner, Celestine and Luxburg, Ulrike},
  journal={Advances in Neural Information Processing Systems},
  volume={38},
  pages={139334--139370},
  year={2026}
}

@inproceedings{levanon2021strategic,
  title={Strategic classification made practical},
  author={Levanon, Sagi and Rosenfeld, Nir},
  booktitle={International Conference on Machine Learning},
  pages={6243--6253},
  year={2021},
  organization={PMLR}
}

@inproceedings{mandel2014offline,
  title={Offline policy evaluation across representations with applications to educational games.},
  author={Mandel, Travis and Liu, Yun-En and Levine, Sergey and Brunskill, Emma and Popovic, Zoran},
  booktitle={AAMAS},
  volume={1077},
  year={2014}
}

@inproceedings{mothilal2020explaining,
  title={Explaining machine learning classifiers through diverse counterfactual explanations},
  author={Mothilal, Ramaravind K and Sharma, Amit and Tan, Chenhao},
  booktitle={Proceedings of the 2020 conference on fairness, accountability, and transparency},
  pages={607--617},
  year={2020}
}

@article{murphy2003optimal,
  title={Optimal dynamic treatment regimes},
  author={Murphy, Susan A},
  journal={Journal of the Royal Statistical Society Series B: Statistical Methodology},
  volume={65},
  number={2},
  pages={331--355},
  year={2003},
  publisher={Oxford University Press}
}

@article{munro2025strategic,
  title={Treatment allocation with strategic agents},
  author={Munro, Evan},
  journal={Management Science},
  volume={71},
  number={1},
  pages={123--145},
  year={2025},
  publisher={INFORMS}
}

@article{newey1994large,
  title={Large sample estimation and hypothesis testing},
  author={Newey, Whitney K and McFadden, Daniel},
  journal={Handbook of econometrics},
  volume={4},
  pages={2111--2245},
  year={1994},
  publisher={Elsevier}
}

@inproceedings{perdomo2020performative,
  title={Performative prediction},
  author={Perdomo, Juan and Zrnic, Tijana and Mendler-D{\"u}nner, Celestine and Hardt, Moritz},
  booktitle={International Conference on Machine Learning},
  pages={7599--7609},
  year={2020},
  organization={PMLR}
}

@inproceedings{
perdomo2025revisiting,
title={Revisiting the Predictability of Performative, Social Events},
author={Juan Carlos Perdomo},
booktitle={Forty-second International Conference on Machine Learning},
year={2025},
url={https://openreview.net/forum?id=Q4yzASDktN}
}

@inproceedings{rosenfeld2024one,
  title={One-shot strategic classification under unknown costs},
  author={Rosenfeld, Elan and Rosenfeld, Nir},
  booktitle={Proceedings of the 41st International Conference on Machine Learning},
  pages={42719--42741},
  year={2024}
}

@article{rothschild1976equilibrium,
  title={Equilibrium in Competitive Insurance Markets: An Essay on the Economics of Imperfect Information},
  author={Rothschild, Michael and Stiglitz, Joseph},
  journal={The Quarterly Journal of Economics},
  volume={90},
  number={4},
  pages={629--649},
  year={1976},
  doi={10.2307/1885326}
}

@inproceedings{shavit2020causal,
  title={Causal strategic linear regression},
  author={Shavit, Yonadav and Edelman, Benjamin and Axelrod, Brian},
  booktitle={International Conference on Machine Learning},
  pages={8676--8686},
  year={2020},
  organization={PMLR}
}

@article{stiglitz1981credit,
  title={Credit rationing in markets with imperfect information},
  author={Stiglitz, Joseph E and Weiss, Andrew},
  journal={The American economic review},
  volume={71},
  number={3},
  pages={393--410},
  year={1981},
  publisher={JSTOR}
}

@article{tsirtsis2020decisions,
  title={Decisions, counterfactual explanations and strategic behavior},
  author={Tsirtsis, Stratis and Gomez Rodriguez, Manuel},
  journal={Advances in Neural Information Processing Systems},
  volume={33},
  pages={16749--16760},
  year={2020}
}

@article{uehara2020off,
  title={Off-policy evaluation and learning for external validity under a covariate shift},
  author={Uehara, Masatoshi and Kato, Masahiro and Yasui, Shota},
  journal={Advances in Neural Information Processing Systems},
  volume={33},
  pages={49--61},
  year={2020}
}

@article{uehara2022review,
  title={A review of off-policy evaluation in reinforcement learning},
  author={Uehara, Masatoshi and Shi, Chengchun and Kallus, Nathan},
  journal={arXiv preprint arXiv:2212.06355},
  year={2022}
}

@book{van2000asymptotic,
  title={Asymptotic statistics},
  author={Van der Vaart, Aad W},
  volume={3},
  year={2000},
  publisher={Cambridge university press}
}

@inproceedings{vo2024causal,
  title={Causal strategic learning with competitive selection},
  author={Vo, Kiet QH and Aadil, Muneeb and Chau, Siu Lun and Muandet, Krikamol},
  booktitle={Proceedings of the AAAI Conference on Artificial Intelligence},
  volume={38},
  pages={15411--15419},
  year={2024}
}

@inproceedings{vo2026explanation,
  title={Explanation Design in Strategic Learning: Sufficient Explanations That Induce Non-harmful Responses},
  author={Vo, Kiet QH and Chau, Siu Lun and Kato, Masahiro and Wang, Yixin and Muandet, Krikamol},
  booktitle={The 29th International Conference on Artificial Intelligence and Statistics},
  year={2026}
}

@inproceedings{xie2024non,
  title={Non-linear welfare-aware strategic learning},
  author={Xie, Tian and Zhang, Xueru},
  booktitle={Proceedings of the AAAI/ACM Conference on AI, Ethics, and Society},
  volume={7},
  pages={1660--1671},
  year={2024}
}

@article{xu2019modeling,
  title={Modeling tabular data using conditional gan},
  author={Xu, Lei and Skoularidou, Maria and Cuesta-Infante, Alfredo and Veeramachaneni, Kalyan},
  journal={Advances in neural information processing systems},
  volume={32},
  year={2019}
}

@article{wachter2018counterfactual,
  title= {Counterfactual Explanations without Opening the Black Box: Automated Decisions and the {GDPR}},
  author  = {Wachter, Sandra and Mittelstadt, Brent and Russell, Chris},
  journal = {Harvard Journal of Law \& Technology},
  volume= {31},
  number= {2},
  year= {2018},
  pages   = {841--887}
}

\appendix
\onecolumn
\section{Summary of Notation}
Table~\ref{tab:summaryofnotations} presents a summary of the notation we used in this paper.
\begin{table}[H]
    \centering
     \caption{Summary of the notation}
    \begin{tabular}{rp{0.68\textwidth}}
    \toprule
\textbf{Notation}&\textbf{Meaning}\\
\midrule
        $X^b$ & A Random variable denoting an agent's base/original/pre-strategic covariates \\
         $x^b$& A value taken by the random variable $X^b$\\
         $\mathcal{X}$ & Space of covariates\\
         $\pi:\mathcal{X}\to [0,1]$ & A DM's decision policy\\
         $T^b$ & A random variable denoting the treatment assigned to an agent with base covariates $X^b$\\
         $t^b$& A value taken by the random variable $T^b$\\
          $\mathcal{T} = \{0,1\}$ & Binary treatment space\\
          $\mathcal{X}^r= \{x^r_j\}_{j=1}^k$ & The set of $k$ recommendations\\
          $e=\{(x_j^r), \pi(x_j^r)\}_{j=1}^k$&Explanation containing recommendations and policy outcomes\\
          $\mathcal{X}^f \coloneq \{x_b\}\cup \mathcal{X}^r$& A set of feasible actions containing base covariates and recommendations\\
          $c:\mathcal{X} \times \mathcal{X} \to \mathbb{R}_{\ge0}$& Cost function denoting the cost involved in modifying covariates\\
          $d:\mathcal{X}\times \mathcal{X} \to \mathbb{R}_{\ge0}$&A (primitive cost) function known to DM and common to all agents\\
          $\alpha\in (0,\infty)$&Agent's specific cost sensitivity\\
          $Z, Y$ & Random variables denoting the exogenous noise and the agent's outcome \\
          $z, y$ & Values taken by the random variables $Z,Y$ \\
          $\mathcal{Z}, \mathcal{Y}$ & Spaces of values for $z,y$ \\
         \bottomrule
    \end{tabular}
    \label{tab:summaryofnotations}
\end{table}

\section{Additional Discussion}
\subsection{On the Agents' Behavioral Model}\label{apx:behavioral-model}
The agents’ behavioral model in \Cref{eq:best-response-model} of \Cref{sec:problem-formulation} follows prior work on strategic learning under local information disclosure \citep{tsirtsis2020decisions, vo2026explanation}. In particular, the motivation comes from the ARex framework \citep{vo2026explanation}, where the authors argue that, in many real-world settings, the DM can collect additional information (e.g., through surveys) to generate recommendations better aligned with agents’ preferences, thereby reducing the likelihood that agents deviate from the recommended actions. Our work further extends the ARex framework to allow an unrestricted number of recommendations, which can further mitigate the chance that agents deviate from the presented recommendation set.

Recent work in strategic classification has questioned whether agents necessarily behave according to idealized best-response models. However, many such works operate under substantially different explanation settings. For example, in \citet{ebrahimi2025double}, agents are provided feature importance weights and must infer for themselves how feature modifications translate into utility gains. As discussed by \citet{vo2026explanation}, such uncertainty, which is common in many explanation paradigms, can give rise to misinterpretation. In contrast, ARexes directly present actionable recommendations together with their associated policy outcomes, thereby substantially reducing ambiguity about the utility consequences of recommended actions.

Nevertheless, as with most strategic learning frameworks, our approach still relies on a stylized behavioral model of agents. Although ARexes can be made more practical through additional preference elicitation and our extension mitigates the likelihood that agents deviate from the recommendation set, such behavior cannot be ruled out entirely. Extending OPE under strategic behavior to accommodate richer or boundedly rational response models remains an important direction for future work. In \Cref{subsec:synthetic-exp}, we additionally provide a simple experiment to study how deviations from the assumed behavioral model may affect estimation performance.

\subsection{On the Agents' Cost Model}\label{apx:cost-model}
\paragraph{The cost structure.} The cost model in \Cref{sec:problem-formulation} separates the modification cost $c(x,x^\prime)$ into two components: a shared primitive cost $d(x,x^\prime)$ and an agent-specific sensitivity parameter $\alpha$. Intuitively, $d(x,x^\prime)$  captures the burden imposed by society (e.g., time, effort, and monetary expenses) required to modify one's covariates. In contrast, $\alpha$ reflects how a specific agent internalizes this burden in their utility, translating the primitive cost $d(x,x^\prime)$ into a personal cost  $c(x,x^\prime)$ that is weighed against the chance of receiving a positive treatment.

The current problem formulation can in principle be generalized to accommodate richer forms of heterogeneous cost functions. For example, instead of a scalar sensitivity parameter $\alpha$, one could consider feature-specific sensitivities that allow different agents to find some feature modifications easier or harder than others. This could correspond to replacing the scalar scaling factor with a quadratic form such as $(x-x^\prime)^\top\Lambda(x-x^\prime)$, where the matrix $\Lambda$ captures how different feature modifications are weighted. However, such extensions substantially complicate the analysis of the induced strategic covariate shift $p(\stra{x}|\base{t},\base{x})$, since one would need to derive the distribution of scalar costs $(x-x^\prime)^\top\Lambda(x-x^\prime)$ from distributions over random vector- or matrix-valued latent variables $\Lambda$. Developing estimation procedures for such settings remains a promising direction for future work.

\paragraph{Modeling the cost sensitivity.} The modeling choice for the cost sensitivity $\alpha$ reflects that agents with different characteristics, captured by $\base{x}$, may differ in their willingness to bear effort or monetary expenses. The transformation $\phi(\base{x})$ allows the conditional mean of $\ln \alpha$ to depend on covariates while accommodating mixtures of categorical and numerical variables. In contrast, the variance parameter $\sigma^2$ captures residual heterogeneity arising from latent factors outside $\base{x}$, such as temporary personal constraints or unobserved motivational differences. Using a shared variance parameter therefore reflects the assumption that, while different groups may differ in their average attitudes toward effort, the dispersion around those averages is broadly comparable across groups.

The log-normal model has a natural motivation based on the central limit theorem (CLT): $\alpha>0$ is a positive scale parameter that describes an agent’s overall cost sensitivity, which can be interpreted as the aggregate effect of many latent factors (e.g., liquidity, time constraints, opportunity cost, motivation, risk tolerance, etc.). If these factors combine approximately multiplicatively on the original scale, then $\ln(\alpha)\mid\base{x}$ becomes approximately additive. If the latent contributions additionally have finite variance, standard central-limit-type arguments motivate approximating $\ln(\alpha)\mid\base{x}$ by a normal distribution. 

We model the conditional mean of $\ln \alpha$ as a linear function of transformed covariates for interpretability and statistical tractability. Since the DM observes only one interaction per agent, the available data are substantially more limited than in repeated-interaction settings, motivating the use of a parsimonious parametric model while still allowing nonlinear relationships through the transformation $\phi(\base{x})$.

\section{Uniqueness of the Agent's Utility Maximizer}
\label{apx:unique-utility-maximiser}
Here, we prove \Cref{lemma:unique-utility-maximiser} under a parametric assumption on $\alpha$ and a design condition for $\recset$.

Because we assume the cost function to have the form $C(x,x^\prime):=\alpha d(x,x^\prime)$ where $\alpha\in\mathbb{R}^+$, when the DM designs $\recset$ (for each agent) such that all recommended actions $\rec{x}_j\in\recset$ have different primitive costs $d(\rec{x}_j,\base{x})$, there is only at most one value for $\alpha$ that could result in multiple utility maximizers. 

To see this, we first suppose that there are two utility maximizers $\rec{x}_\bullet,\rec{x}_\diamond\in\recset$ for an agent $i$ such that $d(\rec{x}_\bullet,\base{x})\neq d(\rec{x}_\diamond,\base{x})$. Note that this is possible because we assume the DM knows $d$. Then, for this agent, the following must hold:
\begin{align}
&u(\pi,\alpha_i,\rec{x}_\bullet,\base{x}) = u(\pi,\alpha_i,\rec{x}_\diamond,\base{x})
\\
\Rightarrow\ &\pi(\rec{x}_\bullet) - \alpha_i d(\rec{x}_\bullet,\base{x}) = \pi(\rec{x}_\diamond) - \alpha_i d(\rec{x}_\diamond,\base{x})
\\
\Rightarrow\ &\pi(\rec{x}_\bullet) - \pi(\rec{x}_\diamond) = \alpha_i \big(\underbrace{d(\rec{x}_\bullet,\base{x}) - d(\rec{x}_\diamond,\base{x})}_{\neq0}\big),
\end{align}
which only holds for at most one value of $\alpha_i$.

Given $(\pi,\alpha,\mathcal{X}^f,\base{x})$, the event $\{|\arg\max_{x\in\mathcal{X}^f}u(\pi,\alpha,x,\base{x})|\geq2\}$ is equivalent to a set of finite $\alpha$ values $A=\{\alpha^\prime:|\arg\max_{x\in\mathcal{X}^f}u(\pi,\alpha^\prime,x,\base{x})|\geq2\}$.

When we assume $\ln\alpha\mid\base{x}\sim\mathcal{N}(\cdot,\cdot)$, we have $p(\alpha\in A|\base{x})=0$ for any set $A$ containing finite values of $\alpha$. Therefore, \Cref{lemma:unique-utility-maximiser} holds as
\begin{align}
p_{\alpha|\base{X}}\Big(|\arg\max_{x\in\mathcal{X}^f}u(\pi,\alpha,x,\base{x})|\geq2\mid\base{x}\Big)=0.
\end{align}
Note that $g$ and $\mathcal{X}^f$ are arguments to evaluate above quantity. This concludes the proof.

\subsection{Probability of the Agent's Response}
\label{apx:derive-response-prob}
Here, we show how to arrive at strict inequalities of utility comparison, by using the result from \Cref{lemma:unique-utility-maximiser}.

Let $M_{\mathcal{X}}$ denote the set of utility maximizers for an agent, we have the following, from previous result:
\begin{align}
p\big(|M_{\mathcal{X}}|=1\mid\recset,\base{T}=0,\base{x}\big)=1
\quad \& \quad
p\big(|M_{\mathcal{X}}|\geq2\mid\recset,\base{T}=0,\base{x}\big)=0.
\end{align}

Note that in our setup, $\alpha\indep\{\recset,\base{T}\}\mid\base{X}$. 
Let $W:=\mathbbm{1}_{|M_\mathcal{X}|=1}$ be a binary random variable denoting if the set of maximizers has the size of $1$ or not. We have
\begin{align}
p(W=1\mid\recset,\base{T}=0,\base{x}\big)=1
\quad \& \quad
p(W=0\mid\recset,\base{T}=0,\base{x}\big)=0.
\end{align}

Then,
\begin{align}
&p(\stra{x}|\rec{\mathcal{X}},\base{T}=0,\base{x})
\\
=\
&\sum_{w\in\{0,1\}}p\big(\{\text{agent picks }\stra{x}\}\ \&\ W=w\ \big\lvert\ \recset,\base{T}=0,\base{x}\big)\ \mathbbm{1}\big(\stra{x}\in\mathcal{X}^f\big)
\\
=\
&\sum_{w\in\{0,1\}}p\big(\{\text{agent picks }\stra{x}\}\ \big\lvert\ W=w,\ldots\big)\ \underbrace{p\big(W=w\mid\recset,\base{T}=0,\base{x}\big)}_{=0\text{ if }w=0}\ \mathbbm{1}\big(\stra{x}\in\mathcal{X}^f\big)
\\
=\
&p\big(\{\text{agent picks }\stra{x}\}\ \&\ W=1\ \big\lvert\ \recset,\base{T}=0,\base{x}\big)\ \mathbbm{1}\big(\stra{x}\in\mathcal{X}^f\big)
\\
=\
&p\big(\{\text{agent picks }\stra{x}\}\ \&\ |M_{\mathcal{X}}|=1\ \big\lvert\ \recset,\base{T}=0,\base{x}\big)\ \mathbbm{1}\big(\stra{x}\in\mathcal{X}^f\big)
\\
=\
&p\big(\big\{u(\pi,C,\stra{x},\base{x})>u(g,C,x,\base{x})\big\}\ \forall x\in\mathcal{X}^f\setminus\{\stra{x}\}\ \big\lvert\ \recset,\base{T}=0,\base{x}\big)\ \mathbbm{1}\big(\stra{x}\in\mathcal{X}^f\big)
\\
=\
&p\big(\big\{u(\pi,C,\stra{x},\base{x})>u(g,C,x,\base{x})\big\}\ \forall x\in\mathcal{X}^f\setminus\{\stra{x}\}\ \big\lvert\ \base{x}\big)\ \mathbbm{1}\big(\stra{x}\in\mathcal{X}^f\big).
\end{align}

\section{Estimation of Agents' Strategic Responses}
\label{apx:estimated-responses}
Here, we prove the consistency of the maximum likelihood estimator $\hat{\theta}_n$.
Recall that
\begin{align}
&\ln\alpha\mid\base{x}\sim\mathcal{N}(\beta^\top\phi(\base{x})+\beta_0,\sigma^2),
\end{align}
where $\phi:\mathcal{X}\to\mathbb{R}^p$ denotes some known transformation function of the covarite vector $\base{x}$. Furthermore, the conditional CDF, $F(\alpha|\base{x};\theta)$, is parameterized by $\theta=(\beta,\beta_0,\sigma)\in\Theta\subseteq\mathbb{R}^p\times\mathbb{R}\times\mathbb{R}^+$.

Given each observation of agents' strategic behavior, i.e., a tuple $(\stra{x},\recset,\base{t}=0,\base{x})$, we define the per-observation likelihood contribution as
\begin{align}
f(\delta^l,\delta^u,\base{x};\theta) = 
\left\{\begin{aligned}
&p_\theta\Big(\delta^l_{(\stra{x},\recset,\base{x})}<\alpha<\delta^u_{(\stra{x},\recset,\base{x})}\ \big\lvert\ \base{x}\Big) \quad \text{if}\ 0\leq\delta^l_{(\stra{x},\recset,\base{x})} < \delta^u_{(\stra{x},\recset,\base{x})},
\\
&f_\text{const} \quad \text{otherwise},
\end{aligned}\right.
\end{align}
where any choice for the constant $f_\text{const}\in(0,1)$ works and each pair of $(\delta^l,\delta^u)$ is the transformation of an observation $(\stra{x},\recset,\base{t}=0,\base{x})$, as defined in the main paper. Note that $0\leq\delta^l\leq\delta^u$ by construction, so $f$ outputs $f_\text{const}$ when $\delta^l=\delta^u$.

Note that in our log-normal model, in the special case $0=\delta^l<\delta^u$, we have
\begin{align}
f(0,\delta^u,\base{x};\theta) &= p_\theta(0<\alpha<\delta^u\ \big\lvert\ \base{x}) = p_\theta(\alpha<\delta^u\ \big\lvert\ \base{x})
\\
&= F_{\alpha|\base{X}}(\delta^u|\base{x};\theta).
\end{align}

Given $n$ observations $\{(\stra{x}_i,\recset_i,\base{t}_i=0,\base{x}_i)\}_{i=1}^n$ collected from agents who received negative base treatment, i.e, $\base{t}_i=0$, we define the empirical log-likelihood and its population version as follows:
\begin{align}
\mathcal{Q}_n(\theta) &= \frac{1}{n}\sum_{i=1}^n\ln f(\delta^l_i,\delta^u_i,\base{x}_i;\theta),
\\
\mathcal{Q}(\theta) &= \mathbb{E}_{P_{\delta^l,\delta^u,\base{X}|\base{T}}}\left[\ln f(\delta^l,\delta^u,\base{X};\theta)\ \Big\lvert\ \base{T}=0\right],
\end{align}
where the distribution $P_{\delta^l,\delta^u,\base{X}|\base{T}}$ is induced by $P_{\stra{X},\recset,\base{X}|\base{T}}$. Furthermore, the density of an observation can be expressed as
\begin{align}
p(\stra{x},\mathcal{X}^r,\base{x}\mid\base{T}=0) &= p(\stra{x}\mid\mathcal{X}^r,\base{x},\base{T}=0)\ p(\mathcal{X}^r,\base{x}\mid\base{T}=0)
\\
&= p(\delta^l<\alpha<\delta^u\mid\base{x}\ ;\ \theta_0)\ \mathbbm{1}(\stra{x}\in\recset\cup\{\base{x}\})\ p(\mathcal{X}^r,\base{x}\mid\base{T}=0),
\end{align}
where the last line follows from what we derive in the main paper, and of course, from \Cref{lemma:unique-utility-maximiser}. Similarly, assuming the log-normal model for $\alpha$ and the uniqueness of utility maximiser (almost surely), the same decomposition holds for any arbitrary parameter value $\theta$, i.e.
\begin{align}
p(\stra{x},\mathcal{X}^r,\base{x}\mid\base{T}=0\ ;\ \theta) &= p(\stra{x}\mid\mathcal{X}^r,\base{x},\base{T}=0\ ;\ \theta)\ p(\mathcal{X}^r,\base{x}\mid\base{T}=0)
\\
&= p(\delta^l<\alpha<\delta^u\mid\base{x}\ ;\ \theta)\ \mathbbm{1}(\stra{x}\in\recset\cup\{\base{x}\})\ p(\mathcal{X}^r,\base{x}\mid\base{T}=0).
\end{align}
We will use this decomposition in the proof for \Cref{lemma:unique-llk-maximiser} (unique likelihood maximiser) later.

Let $\hat{\theta}_n:=\arg\max_{\theta\in\Theta}\mathcal{Q}_n(\theta)$ and $\theta_0$ is the true parameter value in $F(\alpha|\base{x};\theta)$, our goal is to prove that $\hat{\theta}_n\xrightarrow{p}\theta_0$.

We introduce the following lemma that will help proving subsequent results.
\begin{lemma}[Continuity]
\label{lemma:continuity}
For each tuple $(\delta^l,\delta^u,\base{x})\in\mathbb{R}^{d+2}$, the corresponding functions $f(\delta^l,\delta^u,\base{x};\theta)$ and $\ln f(\delta^l,\delta^u,\base{x};\theta)$ are continuous w.r.t. $\theta\in\mathbb{R}^{p}\times\mathbb{R}\times\mathbb{R}^+$.
\end{lemma}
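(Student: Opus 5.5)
The plan is to write $f$ explicitly through the standard normal CDF $\Phi$ and read continuity off that formula. Fix a tuple $(\delta^l,\delta^u,\base{x})$ and write $m(\theta):=\beta^\top\phi(\base{x})+\beta_0$. Since $\ln\alpha\mid\base{x}\sim\mathcal{N}(m(\theta),\sigma^2)$ and $\ln$ is strictly increasing on $(0,\infty)$, in the case $0\le\delta^l<\delta^u$ we have
\begin{align*}
f(\delta^l,\delta^u,\base{x};\theta)=\Phi\Big(\frac{\ln\delta^u-m(\theta)}{\sigma}\Big)-\Phi\Big(\frac{\ln\delta^l-m(\theta)}{\sigma}\Big).
\end{align*}
We use the conventions $\ln 0=-\infty$ and $\Phi(-\infty)=0$, which cover $\delta^l=0$. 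We also use $\ln\infty=+\infty$ and $\Phi(+\infty)=1$, which cover $\delta^u=\infty$ (an empty $\mathcal{X}^f_+$). In the remaining case $\delta^l=\delta^u$, $f\equiv f_\text{const}$ is constant in $\theta$ and trivially continuous.

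For the nondegenerate case, note that $m(\theta)$ is affine, hence continuous, in $(\beta,\beta_0)$. Since $\sigma>0$ on the domain $\mathbb{R}^p\times\mathbb{R}\times\mathbb{R}^+$, the map $\theta\mapsto(\ln\delta-m(\theta))/\sigma$ is continuous for every finite $\delta>0$. The endpoints $\delta=0$ and $\delta=\infty$ give constant terms $0$ and $1$. Because $\Phi$ is continuous on $\mathbb{R}$, each term is continuous in $\theta$, and so is their difference $f$.

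For $\ln f$, it suffices to show that $f(\cdot;\theta)>0$ for every $\theta$, since $\ln$ is continuous on $(0,\infty)$. This positivity is the only step that needs care. When $\delta^l<\delta^u$ we have $\ln\delta^l<\ln\delta^u$ in the extended reals. A normal distribution with $\sigma>0$ has strictly positive density on all of $\mathbb{R}$, so it assigns positive mass to the nonempty open interval $(\ln\delta^l,\ln\delta^u)$. Hence the difference of $\Phi$ values is strictly positive. In the constant case, $f_\text{const}\in(0,1)$ is positive by choice. Composing gives continuity of $\ln f$.

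Finally, tuples with $\delta^l>\delta^u$ never arise, because $0\le\delta^l\le\delta^u$ by construction. If one insists on defining $f$ on all of $\mathbb{R}^{d+2}$, the ``otherwise'' branch assigns $f_\text{const}$ there, which is again constant and positive. This completes the plan.
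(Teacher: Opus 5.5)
Your proposal is correct and takes essentially the same route as the paper: write $f$ explicitly through the normal CDF (the paper uses $\erf$, you use $\Phi$), get continuity in $\theta$ by composing continuous maps with $\sigma>0$, treat the collapsed-interval branch as a constant, and then compose with $\ln$. Your explicit check that $f>0$ for every $\theta$ on nondegenerate intervals, which the paper leaves implicit when it invokes continuity of $\ln$ on $\mathbb{R}^+$, and your handling of $\delta^l=0$ and $\delta^u=\infty$ via extended-real conventions are small but welcome extra care.
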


\begin{proof}
We expand the function $f$ for the case of $0<\delta^l<\delta^u$:
\begin{align}
&p_\theta\Big(\delta^l_{(\stra{x},\recset,\base{x})}<\alpha<\delta^u_{(\stra{x},\recset,\base{x})}\ \big\lvert\ \base{x}\Big)
\\
&= F_{\alpha|\base{X}}(\delta^u|\base{x};\theta) - F_{\alpha|\base{X}}(\delta^l|\base{x};\theta)
\\
&= \frac{1}{2}\left[\erf\left(\frac{\ln\delta^u-\beta^\top\phi(\base{x})-\beta_0}{\sigma\sqrt{2}}\right)-\erf\left(\frac{\ln\delta^l-\beta^\top\phi(\base{x})-\beta_0}{\sigma\sqrt{2}}\right)\right],
\end{align}
where $\erf$ denotes the error function.

Because each member function inside $f$ is continuous w.r.t. $\theta$, such as inversion ($1/\sigma$), linear transformation ($\beta^\top\phi(\base{x})+\beta_0$), and $\erf(\cdot)$, their composition is also continuous on $\Theta$. Similar argument applies for the case of $0=\delta^l<\delta^u$.

For the case of collapsed intervals, $f$ outputs a constant so it is continuous. Thus $f$ is continuous on $\Theta$. 

Similarly, as $\ln$ is continuous on the domain $\mathbb{R}^+$, $\ln\circ f$ is continuous on $\Theta$. This concludes the proof.
\end{proof}

\subsection{Consistency of $\hat{\theta}_n$}

\begin{lemma}[Dominance]
\label{lemma:dominance}
Under \Cref{assumption:compactness}, \Cref{assumption:finiteness}, and when \Cref{lemma:unique-utility-maximiser} holds, there exists an upper bound $\eta\in\mathbb{R}$ such that
\begin{align}
|\ln f(\delta^l,\delta^u,\base{x},\theta)| < \eta \quad \forall\theta\in\Theta,\ \forall (\delta^l,\delta^u,\base{x})\in\mathcal{C}_{g,\sigma,d,\mathcal{X}},
\end{align}
where $\mathcal{C}_{g,\sigma,d,\mathcal{X}}$ denotes the set of all possible values of $(\delta^l,\delta^u,\base{x})$ induced by $\{g,\sigma,d,\mathcal{X}\}$.
\end{lemma}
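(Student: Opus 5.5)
Since $f\in(0,1]$ (it is either a probability or the constant $f_\text{const}\in(0,1)$), we have $\ln f\le 0$. It therefore suffices to bound $f$ away from zero uniformly, i.e.\ to show $\inf_{\theta\in\Theta,(\delta^l,\delta^u,\base{x})\in\mathcal{C}}f>0$. I would take $\eta$ to be $1+|\ln(\cdot)|$ of that infimum.

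\textbf{Step 1: the observation set is finite.} By \Cref{assumption:finiteness}, $\mathcal{X}$ is finite. Each pair $(\delta^l,\delta^u)$ is a deterministic function of $(\stra{x},\recset,\base{x})$, $\pi_0$ and $d$, and $\recset\subseteq\mathcal{X}$ has at most $k$ elements. Hence $\mathcal{C}_{g,\sigma,d,\mathcal{X}}$ is a finite set of triples. For every triple either $\delta^l=\delta^u$, which gives $f=f_\text{const}>0$, or $0\le\delta^l<\delta^u\le\infty$.

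\textbf{Step 2: a fixed nondegenerate triple is bounded below over $\Theta$.} In the case $\delta^l<\delta^u$, write $f$ through the $\erf$ expression from the proof of \Cref{lemma:continuity}. The boundary cases are handled by $\ln 0=-\infty$, which gives the CDF value $0$, and $\ln\infty=\infty$, which gives the CDF value $1$. Since $\ln\delta^l<\ln\delta^u$ and $\erf$ is strictly increasing, $f(\cdot;\theta)>0$ for every $\theta$ with $\sigma>0$. By \Cref{lemma:continuity}, $\theta\mapsto f(\delta^l,\delta^u,\base{x};\theta)$ is continuous. By \Cref{assumption:compactness}, $\Theta$ is a compact subset of $\mathbb{R}^p\times\mathbb{R}\times\mathbb{R}^+$, so $\sigma$ is bounded away from $0$ and $\infty$ on $\Theta$ and $\beta,\beta_0$ are bounded. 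A positive continuous function on a compact set attains a positive minimum $m(\delta^l,\delta^u,\base{x})>0$.

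\textbf{Step 3: combine.} Take
\begin{align*}
\eta := 1+\max\Big\{|\ln f_\text{const}|,\ \max_{\mathcal{C}}|\ln m(\delta^l,\delta^u,\base{x})|\Big\},
\end{align*}
which is a finite maximum over the finite set $\mathcal{C}$. Then $|\ln f|<\eta$ for all $\theta\in\Theta$ and all triples in $\mathcal{C}$. \Cref{lemma:unique-utility-maximiser} is used only to justify that the observed likelihood takes this interval form.

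\textbf{Main obstacle.} The delicate point is the limiting cases, namely $\delta^l=0$ and $\delta^u=\infty$ (when $\mathcal{X}^f_+$ is empty). There I must check that $f$ is still a strictly positive, continuous function of $\theta$; it reduces to $F$, $1-F$, or $1$. I also need $\sigma$ to stay bounded away from $0$ on $\Theta$. This follows because a compact subset of $\mathbb{R}^+$ excludes $0$, and without it $f$ could degenerate to $0$ as $\sigma\to0$.
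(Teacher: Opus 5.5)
Your proposal is correct and follows essentially the same route as the paper. For each fixed triple, continuity of $\theta\mapsto f$ on the compact set $\Theta$ gives a bound via the extreme value theorem, and finiteness of $\mathcal{C}_{g,\sigma,d,\mathcal{X}}$ (inherited from the finiteness of $\mathcal{X}$) turns these into a single uniform $\eta$. Your version is more careful than the paper's: you show explicitly that $f$ is strictly positive and attains a positive minimum, including the cases $\delta^l=0$ and $\delta^u=\infty$ and using that $\sigma$ stays bounded away from $0$; the paper leaves this step implicit when it passes from ``$f$ is bounded'' to a bound on $|\ln f|$.
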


\begin{proof}
From \Cref{lemma:continuity}, for each configuration $(\delta^l,\delta^u,\base{x})$, the function $f(\delta^l,\delta^u,\base{x};\theta)$ is continuous w.r.t. $\theta$. In addition, because the space $\Theta$ is compact, $f(\delta^l,\delta^u,\base{x};\theta)$ is bounded on $\Theta$, by the extreme value theorem.

Because the space $\mathcal{X}$ is finite, $\mathcal{C}_{g,\sigma,d,\mathcal{X}}$ is also finite, for a given instantiation of $\{g,\sigma,d\}$. Consequently, there are finitely many bounds for $f(\delta^l,\delta^u,\base{x};\theta)$ for all $(\delta^l,\delta^u,\base{x})\in\mathcal{C}_{g,\sigma,d,\mathcal{X}}$ and $\theta\in\Theta$. Therefore, there exists a constant $\eta\in\mathbb{R}$ such that
\begin{align}
|\ln f(\delta^l,\delta^u,\base{x},\theta)| < \eta \quad \forall\theta\in\Theta,\ \forall (\delta^l,\delta^u,\base{x})\in\mathcal{C}_{g,\sigma,d,\mathcal{X}}.
\end{align}
This concludes the proof.
\end{proof}

\begin{lemma}[Uniform convergence]
\label{lemma:uniform-convergence}
Under \Cref{assumption:compactness} and \Cref{assumption:finiteness}, we have
\begin{align}
\sup_{\theta\in\Theta}|\mathcal{Q}_n(\theta)-\mathcal{Q}(\theta)|\xrightarrow{p}0.
\end{align}
\end{lemma}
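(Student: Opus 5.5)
Since $\mathcal{X}$ is finite (\Cref{assumption:finiteness}), every observation $(\delta^l,\delta^u,\base{x})$ takes values in the finite set $\mathcal{C}_{g,\sigma,d,\mathcal{X}}$ used in \Cref{lemma:dominance}. This holds because $\delta^l,\delta^u$ are deterministic functions of $(\stra{x},\recset,\base{x})$ and the policy values on the finite space. I therefore plan to write both the empirical and population log-likelihoods as finite sums over this support. Let $\mathcal{C}$ denote the support and, for each $c=(\delta^l,\delta^u,\base{x})\in\mathcal{C}$, let $\hat p_n(c)$ be the empirical frequency of $c$ among the $n$ observations with $\base{t}=0$, and $p(c)$ its probability conditional on $\base{T}=0$. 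Then
\begin{align*}
\mathcal{Q}_n(\theta)-\mathcal{Q}(\theta) = \sum_{c\in\mathcal{C}} \big(\hat p_n(c)-p(c)\big)\ln f(c;\theta).
\end{align*}

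The key step is to bound the supremum over $\theta$ termwise. By \Cref{lemma:dominance}, $|\ln f(c;\theta)|<\eta$ uniformly over $\theta\in\Theta$ and $c\in\mathcal{C}$. This gives
\begin{align*}
\sup_{\theta\in\Theta}|\mathcal{Q}_n(\theta)-\mathcal{Q}(\theta)| \le \eta\sum_{c\in\mathcal{C}}|\hat p_n(c)-p(c)|.
\end{align*}
The right-hand side no longer depends on $\theta$. Because the observations are i.i.d., the weak law of large numbers applied to the indicators $\mathbbm{1}\{(\delta^l_i,\delta^u_i,\base{x}_i)=c\}$ gives $\hat p_n(c)\xrightarrow{p}p(c)$ for each $c$. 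A finite sum of terms converging in probability to zero converges in probability to zero, so the bound tends to zero and the claim follows.

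The main obstacle is justifying that $\mathcal{C}$ is genuinely finite and that \Cref{lemma:dominance} applies to every point of it. In particular, $\ln f$ must stay bounded away from $-\infty$ on every support point: an interval $(\delta^l,\delta^u)$ with $\delta^l<\delta^u$ has positive log-normal mass for every $\theta$ with $\sigma>0$, and collapsed intervals receive the constant $f_\text{const}\in(0,1)$. I would note that compactness of $\Theta$ inside $\mathbb{R}^p\times\mathbb{R}\times\mathbb{R}^+$ keeps $\sigma$ bounded away from $0$, which makes this mass continuous in $\theta$ and hence bounded below by a positive constant.

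An alternative route, if one prefers not to use the finite-sum representation, is the standard uniform law of large numbers (e.g., \citet{newey1994large}, Lemma 2.4). Its hypotheses are compact $\Theta$, continuity of $\ln f$ in $\theta$ from \Cref{lemma:continuity}, and an integrable envelope given by the constant $\eta$ from \Cref{lemma:dominance}. I expect the finite-support argument to be cleaner, so I will present it as the main proof and mention this route as a remark.
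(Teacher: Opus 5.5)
Your proof is correct. Your main argument is a different route from the paper's, whose proof is a one-line appeal to the uniform law of large numbers (Lemma 2.4 of \citet{newey1994large}), using compactness of $\Theta$, \Cref{lemma:continuity} for continuity, and \Cref{lemma:dominance} as the envelope. That is exactly the route you relegate to a remark.

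Your decomposition $\mathcal{Q}_n(\theta)-\mathcal{Q}(\theta)=\sum_{c}(\hat p_n(c)-p(c))\ln f(c;\theta)$ reduces the uniform statement to finitely many scalar laws of large numbers. It therefore needs only the uniform bound $\eta$; continuity enters only through the proof of that bound. It also gives more than the lemma asks for:
\begin{itemize}
\item almost-sure convergence, via the strong law applied to each indicator;
\item an $O_p(n^{-1/2})$ rate, since each $\hat p_n(c)-p(c)$ is $O_p(n^{-1/2})$.
\end{itemize}

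The ULLN route is shorter on the page and in principle does not need finite support, only an integrable envelope. It also delivers continuity of $\mathcal{Q}$, which Theorem 2.1 of \citet{newey1994large} requires in the next step. Here that generality is not actually used: the envelope from \Cref{lemma:dominance} is itself obtained from finiteness of $\mathcal{C}$, and continuity of $\mathcal{Q}$ is immediate as a finite weighted sum of continuous functions $\ln f(c;\cdot)$.

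Your explicit check that $\ln f$ is bounded below also makes precise a step the paper's proof of \Cref{lemma:dominance} leaves implicit. That check combines positive log-normal mass on nondegenerate intervals, $f_\text{const}>0$ on collapsed ones, and a positive minimum over compact $\Theta$ by continuity. The paper only says $f$ is bounded, whereas what is needed is that its minimum over $\Theta$ is strictly positive.
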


\begin{proof}
From \Cref{assumption:compactness} (compactness), \Cref{lemma:continuity} (continuity), and \Cref{lemma:dominance} (dominance), we directly get the desired result, by using Lemma 2.4 of \citet{newey1994large}.
\end{proof}

\begin{lemma}[Identifiability]
\label{lemma:identifiability}
Under \Cref{assumption:compactness}, \Cref{assumption:finiteness}, \Cref{assumption:weak-positivity}, and when \Cref{lemma:unique-utility-maximiser} holds, we have
\begin{align}
\big\{
    f(\delta^l,\delta^u,\base{x};\theta) = f(\delta^l,\delta^u,\base{x};\theta_0) \quad \forall (\delta^l,\delta^u,\base{x})\in\mathcal{C}_{g,\sigma,d,\mathcal{X}}
\big\}
\Rightarrow
\theta = \theta_0.
\end{align}
\end{lemma}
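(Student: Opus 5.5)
The plan is to use only the observations singled out by \Cref{assumption:weak-positivity}: those with $\delta^l=0$, at base covariates in $\base{\mathcal{X}}_\diamond$. These are configurations in $\mathcal{C}_{g,\sigma,d,\mathcal{X}}$, since they occur with positive probability. For them the likelihood contribution reduces to the conditional CDF. Write $m_\theta(\base{x}):=\beta^\top\phi(\base{x})+\beta_0$ and let $\Phi$ be the standard normal CDF. Then
\begin{align*}
f(0,\delta^u,\base{x};\theta)=F_{\alpha|\base{X}}(\delta^u|\base{x};\theta)=\Phi\Big(\frac{\ln\delta^u-m_\theta(\base{x})}{\sigma}\Big).
\end{align*}
So the hypothesis $f(\cdot;\theta)=f(\cdot;\theta_0)$ on $\mathcal{C}_{g,\sigma,d,\mathcal{X}}$ gives, for each $\base{x}\in\base{\mathcal{X}}_\diamond$ and each of its two values $\delta^{u1}\neq\delta^{u2}$,
\begin{align*}
\Phi\Big(\frac{\ln\delta^{uj}-m_\theta(\base{x})}{\sigma}\Big)=\Phi\Big(\frac{\ln\delta^{uj}-m_{\theta_0}(\base{x})}{\sigma_0}\Big),\quad j=1,2.
\end{align*}

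The key steps, in order, are as follows.
\begin{enumerate}
\item Since $\Phi$ is strictly increasing, we can invert it. This turns the equalities into two linear equations:
\begin{align*}
\frac{\ln\delta^{uj}-m_\theta(\base{x})}{\sigma}=\frac{\ln\delta^{uj}-m_{\theta_0}(\base{x})}{\sigma_0},\quad j=1,2.
\end{align*}
\item Subtract the $j=2$ equation from the $j=1$ equation. This gives $(\ln\delta^{u1}-\ln\delta^{u2})(1/\sigma-1/\sigma_0)=0$. Because $\delta^{u1}\neq\delta^{u2}$ are both positive, their logarithms differ, so $\sigma=\sigma_0$.
\item Substitute $\sigma=\sigma_0$ back into either equation. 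This gives $m_\theta(\base{x})=m_{\theta_0}(\base{x})$, i.e.\ $[1,\phi(\base{x})^\top](\beta_0-\beta_0^*,\beta-\beta^*)^\top=0$ for every $\base{x}\in\base{\mathcal{X}}_\diamond$.
\item Stack these equations over $p+1$ points of $\base{\mathcal{X}}_\diamond$. This gives $\tilde\Phi_{\base{x}}v=0$ with $v=(\beta_0-\beta_0^*,\beta-\beta^*)$. Since $\tilde\Phi_{\base{x}}$ has full column rank $p+1$, $v=0$, hence $\theta=\theta_0$.
\end{enumerate}

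The main obstacle is bookkeeping rather than analysis. We must check that the configurations $(0,\delta^{uj},\base{x})$ lie in the set over which equality is assumed; positive probability places them in the support. We must also confirm that they are genuine intervals rather than collapsed ones, so that $f$ is not the constant $f_\text{const}$; here $\delta^{uj}>0=\delta^l$ by assumption.

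One subtlety needs care. With $\delta^l=0$, the lower bound contributes $F(0)=0$ because $\alpha>0$ under the log-normal model, so $f$ is exactly the CDF and no $\ln 0$ term appears. This is justified because the case $\delta^l=0$ is explicitly covered in the expansion just before \Cref{lemma:continuity}.
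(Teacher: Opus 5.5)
Your proposal is correct and follows essentially the same route as the paper's proof: you restrict to the $\delta^l=0$ configurations guaranteed by \Cref{assumption:weak-positivity}, invert the strictly increasing normal CDF (the paper phrases this via $\erf$), use the two distinct values $\delta^{u1}\neq\delta^{u2}$ to force $\sigma=\sigma_0$, and then use the full-column-rank augmented design matrix over $p+1$ covariate values to recover $(\beta,\beta_0)$. Your explicit checks that these configurations lie in $\mathcal{C}_{g,\sigma,d,\mathcal{X}}$ and are non-collapsed intervals are small additions that the paper leaves implicit.
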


\begin{proof}
Suppose that there exists a parameter value $\theta_\bullet$ such that $f(\delta^l,\delta^u,\base{x};\theta_\bullet) = f(\delta^l,\delta^u,\base{x};\theta_0)$ for all $(\delta^l,\delta^u,\base{x})\in\mathcal{C}_{g,\sigma,d,\mathcal{X}}$. From \Cref{assumption:weak-positivity} (weak positivity \& and full-rank), there exist two observations $(0,\delta^{u1},\base{x})$ and $(0,\delta^{u2},\base{x})$, for some $\base{x}\in\mathcal{X}$, such that for all $\delta^u\in\{\delta^{u1},\delta^{u2}\}$,
\begin{align}
f(0,\delta^{u},\base{x};\theta_\bullet) &= f(0,\delta^{u},\base{x};\theta_0)
\\
\Rightarrow 
\erf\left(\frac{\ln\delta^{u}-\beta(\theta_\bullet)^\top\phi(\base{x})-\beta_0(\theta_\bullet)}{\sigma(\theta_\bullet)\sqrt{2}}\right) &= \erf\left(\frac{\ln\delta^{u}-\beta(\theta_0)^\top\phi(\base{x})-\beta_0(\theta_0)}{\sigma(\theta_0)\sqrt{2}}\right),
\end{align}
where we use $\beta(\theta),\beta_0(\theta),\sigma(\theta)$ to denote the respective elements in a parameter vector $\theta$. Because the error function is strictly increasing on the domain $\mathbb{R}$, hence invertible, we get the following for all $\delta^u\in\{\delta^{u1},\delta^{u2}\}$:
\begin{align}
&\frac{\ln\delta^{u}-\beta(\theta_\bullet)^\top\phi(\base{x})-\beta_0(\theta_\bullet)}{\sigma(\theta_\bullet)}
=
\frac{\ln\delta^{u}-\beta(\theta_0)^\top\phi(\base{x})-\beta_0(\theta_0)}{\sigma(\theta_0)}
\\
\Rightarrow
&\ \ln\delta^u\left(\frac{1}{\sigma(\theta_\bullet)}-\frac{1}{\sigma(\theta_0)}\right)+\frac{-\beta(\theta_\bullet)^\top\phi(\base{x})-\beta_0(\theta_\bullet)}{\sigma(\theta_\bullet)}-\frac{-\beta(\theta_0)^\top\phi(\base{x})-\beta_0(\theta_0)}{\sigma(\theta_0)} = 0.
\end{align}

Because the above holds for two different values of $\delta^u\in\mathbb{R}^+$, it must hold that $\sigma(\theta_\bullet) = \sigma(\theta_0)$, then we obtain
\begin{align}
&\beta(\theta_\bullet)^\top\phi(\base{x}) + \beta_0(\theta_\bullet)
=
\beta(\theta_0)^\top\phi(\base{x}) + \beta_0(\theta_0)
\\
\Rightarrow &\ \big(\beta(\theta_\bullet)-\beta(\theta_0)\big)^\top\phi(\base{x})
+ \big(\beta_0(\theta_\bullet)-\beta_0(\theta_0)\big) = 0.
\end{align}

When this scenario holds for $p+1$ distinct values of $\base{x}$ such that the augmented design matrix $\tilde{\Phi}_{\base{x}}$ has full column rank (\Cref{assumption:weak-positivity}), by the closed form solution for ordinary least squares, we get $\beta(\theta_\bullet)-\beta(\theta_0) = 0$ and $\beta_0(\theta_\bullet)-\beta_0(\theta_0)=0$.
Thus $\theta_\bullet=\theta_0$ and this concludes the proof.
\end{proof}

\begin{lemma}[Unique likelihood maximizer]
\label{lemma:unique-llk-maximiser}
Under \Cref{assumption:compactness}, \Cref{assumption:finiteness}, \Cref{assumption:weak-positivity}, and additionally when \Cref{lemma:unique-utility-maximiser} holds, the true parameter $\theta_0$ is the unique global maximizer of $\mathcal{Q}(\theta)$.
\end{lemma}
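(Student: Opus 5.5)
The proof is the standard information-inequality (Kullback--Leibler) argument for maximum likelihood, combined with \Cref{lemma:identifiability}. First, write the observation density under any $\theta$ using the decomposition derived above:
\[
p(\stra{x},\recset,\base{x}\mid\base{T}=0;\theta)=f(\delta^l,\delta^u,\base{x};\theta)\,\mathbbm{1}(\stra{x}\in\mathcal{X}^f)\,p(\recset,\base{x}\mid\base{T}=0).
\]
The last two factors do not depend on $\theta$. Therefore
\[
\mathcal{Q}(\theta)-\mathcal{Q}(\theta_0)=\mathbb{E}_{\theta_0}\!\left[\ln\frac{p(\stra{X},\recset,\base{X}\mid\base{T}=0;\theta)}{p(\stra{X},\recset,\base{X}\mid\base{T}=0;\theta_0)}\,\Big\lvert\,\base{T}=0\right],
\]
which is minus a KL divergence. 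All expectations are finite: by \Cref{lemma:dominance}, $\ln f$ is uniformly bounded on the finite support $\mathcal{C}_{g,\sigma,d,\mathcal{X}}$.

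\textbf{Step 1: $\theta_0$ is a maximizer.} The observation space is finite (\Cref{assumption:finiteness}, together with finitely many explanations per $\base{x}$), so expectations are finite sums. For each fixed $(\recset,\base{x})$, the map $\stra{x}\mapsto f(\cdot;\theta)\mathbbm{1}(\stra{x}\in\mathcal{X}^f)$ is a probability distribution over $\mathcal{X}^f$ for every $\theta$. This holds because, by \Cref{lemma:unique-utility-maximiser}, the intervals $(\delta^l,\delta^u)$ for the different $\stra{x}\in\mathcal{X}^f$ partition $(0,\infty)$ up to null sets.

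The collapsed-interval convention $f=f_{\text{const}}$ needs separate handling. A choice $\stra{x}$ with $\delta^l=\delta^u$, or with $\delta^0\le 0$, has probability zero under $\theta_0$, so such observations never occur. I will therefore restrict the expectation to the $\theta_0$-support, where $f$ is the genuine interval probability.

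Applying Jensen's inequality to $\ln$ conditionally on $(\recset,\base{x})$ gives
\[
\mathbb{E}_{\theta_0}\!\left[\ln\frac{f(\cdot;\theta)}{f(\cdot;\theta_0)}\,\Big\lvert\,\recset,\base{x}\right]\le \ln\sum_{\stra{x}} f(\cdot;\theta)\le 0.
\]
Hence $\mathcal{Q}(\theta)\le\mathcal{Q}(\theta_0)$.

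\textbf{Step 2: uniqueness.} Equality in Jensen forces $f(\cdot;\theta)/f(\cdot;\theta_0)$ to be constant, equal to $1$, on the support, for each $(\recset,\base{x})$ with positive probability. So $\mathcal{Q}(\theta)=\mathcal{Q}(\theta_0)$ implies $f(\delta^l,\delta^u,\base{x};\theta)=f(\delta^l,\delta^u,\base{x};\theta_0)$ for every observed configuration. That is the hypothesis of \Cref{lemma:identifiability}, which gives $\theta=\theta_0$.

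\textbf{Main obstacle.} \Cref{lemma:identifiability} is stated for all $(\delta^l,\delta^u,\base{x})\in\mathcal{C}_{g,\sigma,d,\mathcal{X}}$, whereas Step 2 only gives equality on configurations of positive probability. The fix is that its proof only uses the configurations $(0,\delta^{u1},\base{x})$ and $(0,\delta^{u2},\base{x})$ for $\base{x}\in\base{\mathcal{X}}_\diamond$. These are exactly the positive-probability configurations guaranteed by \Cref{assumption:weak-positivity}, so the argument goes through once this is stated explicitly.

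A second subtlety is that the strict Jensen step needs the sum over $\stra{x}$ to equal $1$ under $\theta$ as well. If part of the mass under $\theta$ falls on configurations outside the $\theta_0$-support, the sum over the support is at most $1$. This only strengthens the inequality, so it does no harm.
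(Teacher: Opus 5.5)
Your proposal is correct and follows essentially the same route as the paper: a Jensen (information-inequality) argument on the ratio $f(\cdot;\theta)/f(\cdot;\theta_0)$, using the density decomposition to show its expectation is one, with identifiability supplying strictness. Your equality-case analysis forces the ratio to equal $1$ on the $\theta_0$-support, and you note that identifiability only needs the positive-probability configurations from the weak-positivity assumption; together these tighten a step the paper passes over when it simply asserts that the ratio is non-constant.
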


\begin{proof}
We use the idea in Lemma 2.2 of \citet{newey1994large} where the strict version of Jensen's inequality is employed for a non-constant random variable. For any $\theta\neq\theta_0$,
\begin{align}
\mathcal{Q}(\theta_0)-\mathcal{Q}(\theta) &= \mathbb{E}_{P_{\delta^l,\delta^u,\base{X}|\base{T}}}\left[\ln\frac{f(\delta^l,\delta^u,\base{X};\theta_0)}{f(\delta^l,\delta^u,\base{X};\theta)}\ \Big\lvert\ \base{T}=0\right]
\\
&= \mathbb{E}_{P_{\delta^l,\delta^u,\base{X}|\base{T}}}\left[-\ln\frac{f(\delta^l,\delta^u,\base{X};\theta)}{f(\delta^l,\delta^u,\base{X};\theta_0)}\ \Big\lvert\ \base{T}=0\right]
\\
&> -\ln\mathbb{E}_{P_{\delta^l,\delta^u,\base{X}|\base{T}}}\left[\frac{f(\delta^l,\delta^u,\base{X};\theta)}{f(\delta^l,\delta^u,\base{X};\theta_0)}\ \Big\lvert\ \base{T}=0\right],
\end{align}
where $f(\delta^l,\delta^u,\base{X};\theta)/f(\delta^l,\delta^u,\base{X};\theta_0)$ is a non-constant random variable with support in $\mathcal{C}_{g,\sigma,d,\mathcal{X}}$ thanks to \Cref{lemma:identifiability} (identifiability) and \Cref{lemma:unique-utility-maximiser} (unique utility maximiser), which results in non-zero probabilities for non-degenerate intervals.

We further rewrite the inequality below, where variables $\delta^l,\delta^u$ are just some transformations of $\stra{x},\recset,\base{x}$ which in turn are transformations of $\alpha,\recset,\base{x}$,
\begin{align}
\mathcal{Q}(\theta_0)-\mathcal{Q}(\theta) &> -\ln\mathbb{E}_{P_{\delta^l,\delta^u,\base{X}|\base{T}}}\left[\frac{f(\delta^l,\delta^u,\base{X};\theta)}{f(\delta^l,\delta^u,\base{X};\theta_0)}\ \Big\lvert\ \base{T}=0\right]
\\
&= -\ln\mathbb{E}_{P_{\stra{X},\recset,\base{X}|\base{T}}}\left[\frac{f\big(\delta^l,\delta^u,\base{X};\theta\big)}{f\big(\delta^l,\delta^u,\base{X};\theta_0\big)}\ \Big\lvert\ \base{T}=0\right]
\\
&= -\ln\int_{\mathcal{X}\times\mathcal{P}(\mathcal{X})\times\mathcal{X}}\frac{f(\delta^l,\delta^u,\base{x};\theta)}{f(\delta^l,\delta^u,\base{x};\theta_0)}\ p(\stra{x},\recset,\base{x}\mid\base{T}=0;\theta_0)\ d(\stra{x},\recset,\base{x})
\\
&= -\ln\int_{\mathcal{X}\times\mathcal{P}(\mathcal{X})\times\mathcal{X}}\frac{p(\delta^l<\alpha<\delta^u\mid\base{x};\theta)}{p(\delta^l<\alpha<\delta^u\mid\base{x};\theta_0)}\ p(\stra{x},\recset,\base{x}\mid\base{T}=0;\theta_0)\ d(\stra{x},\recset,\base{x})
\\
&= -\ln\int_{\mathcal{X}\times\mathcal{P}(\mathcal{X})\times\mathcal{X}}p(\stra{x},\recset,\base{x}\mid\base{T}=0;\theta)\ d(\stra{x},\recset,\base{x})
\\
&= -\ln 1 
\\
&=0.
\end{align}

Note that we could cancel out the term $f(\delta^l,\delta^u,\base{x};\theta_0)$ by using our earlier decomposition for $p(\stra{x},\recset,\base{x}\mid\base{T}=0;\theta_0)$ at the beginning of this section. Furthermore, we do not have to worry about the case where $f(\delta^l,\delta^u,\base{x};\theta_0)=f_\text{const}$ because the probability of observing a degenerate interval is zero, thanks to \Cref{lemma:unique-utility-maximiser}.

Thus, we have shown that $\theta_0$ is the unique global maximiser for $\mathcal{Q}$. This concludes the proof.
\end{proof}

We now show the main theorem on the consistency of $\hat{\theta}_n$.

\begin{theorem}[Consistency of $\hat{\theta}_n$]
Under \Cref{assumption:compactness}, \Cref{assumption:finiteness}, \Cref{assumption:weak-positivity}, and when \Cref{lemma:unique-utility-maximiser} holds, we have $\hat{\theta}_n\xrightarrow{p}\theta_0$.
\end{theorem}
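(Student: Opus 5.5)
The consistency of $\hat\theta_n$ follows from the standard extremum-estimator consistency theorem (Theorem 2.1 of \citet{newey1994large}). That theorem needs four ingredients: a compact parameter space, a population objective uniquely maximized at $\theta_0$, continuity of that objective, and uniform convergence in probability of the sample objective to it. Each has already been prepared above, so the plan is to verify them in turn and then apply the theorem.

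\textbf{Step 1 (compactness).} This is exactly \Cref{assumption:compactness}.

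\textbf{Step 2 (identification).} \Cref{lemma:unique-llk-maximiser} gives that $\theta_0$ is the unique global maximizer of $\mathcal{Q}(\theta)$. That lemma rests on \Cref{lemma:identifiability} and on the a.s.\ uniqueness of the agent's utility maximizer (\Cref{lemma:unique-utility-maximiser}), which is where the distinct-distance design of $\recset$ enters.

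\textbf{Step 3 (continuity of $\mathcal{Q}$).} Because \Cref{assumption:finiteness} makes the support $\mathcal{C}_{g,\sigma,d,\mathcal{X}}$ finite, $\mathcal{Q}(\theta)$ is a finite weighted sum
\[
\mathcal{Q}(\theta)=\sum_{c\in\mathcal{C}} p(c\mid \base{T}=0)\,\ln f(c;\theta).
\]
Each summand is continuous in $\theta$ by \Cref{lemma:continuity}, so $\mathcal{Q}$ is continuous. Alternatively, continuity follows from dominated convergence using the bound $\eta$ of \Cref{lemma:dominance}.

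\textbf{Step 4 (uniform convergence).} $\sup_{\theta\in\Theta}|\mathcal{Q}_n(\theta)-\mathcal{Q}(\theta)|\xrightarrow{p}0$ is \Cref{lemma:uniform-convergence}. Its inputs are that the observations $\{(\stra{x}_i,\recset_i,\base{x}_i)\}$ with $\base{t}_i=0$ are i.i.d.\ draws from $P_{\cdot\mid\base{T}=0}$, since agents interact independently, together with \Cref{lemma:dominance}.

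\textbf{Step 5 (conclusion).} With all four conditions verified, Theorem 2.1 of \citet{newey1994large} yields $\hat\theta_n\xrightarrow{p}\theta_0$. The argument is the usual one: for any open neighbourhood $N$ of $\theta_0$, compactness of $\Theta\setminus N$ and continuity give
\[
\epsilon:=\mathcal{Q}(\theta_0)-\sup_{\Theta\setminus N}\mathcal{Q}>0 .
\]
Uniform convergence then implies that, with probability tending to one, $\mathcal{Q}(\hat\theta_n)>\mathcal{Q}(\theta_0)-\epsilon$, which forces $\hat\theta_n\in N$.

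\textbf{Main obstacle.} The delicate point is checking that the dominance bound in \Cref{lemma:dominance} actually holds, i.e.\ that $\ln f$ stays bounded away from $-\infty$. This requires $f(\cdot;\theta)>0$ on $\Theta$ for every support point. For non-degenerate intervals this holds because $\sigma$ is bounded away from $0$ on the compact $\Theta\subset\mathbb{R}^+$, so the log-normal assigns positive mass to any nonempty interval. Degenerate intervals are assigned $f_{\text{const}}>0$ by definition. I would verify this explicitly rather than rely only on the extreme value theorem, which bounds $f$ from above but not $\ln f$ from below. I would also note that the normalising factor $\mathbbm{1}(\stra{x}\in\mathcal{X}^f)\,p(\recset,\base{x}\mid\base{T}=0)$ does not depend on $\theta$, which is what legitimises working with the conditional likelihood $f$.
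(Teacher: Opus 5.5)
Your proposal is correct and follows the paper's proof: like the paper, you verify the hypotheses of Theorem 2.1 of \citet{newey1994large} (compactness from \Cref{assumption:compactness}, continuity, uniform convergence via \Cref{lemma:uniform-convergence}, and identification via \Cref{lemma:unique-llk-maximiser}) and conclude. Your explicit check that $f(\cdot;\theta)>0$ on $\Theta$ for every support point is a useful addition: it is what makes $\ln f$ finite and continuous, hence bounded by the extreme value theorem, and the paper's continuity and dominance lemmas leave this step implicit.
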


\begin{proof}
From \Cref{assumption:compactness} (compactness), \Cref{lemma:continuity} (continuity), \Cref{lemma:uniform-convergence} (uniform convergence), and \Cref{lemma:unique-llk-maximiser} (unique likelihood maximizer), we directly get the desired result, by using Theorem 2.1 of \citet{newey1994large}.
\end{proof}

\subsection{Consistency of $F(\alpha|\base{x};\hat{\theta}_n)$}
Because we assume the log-normal model where $\ln\alpha\mid\base{x}\sim\mathcal{N}(\beta^\top\phi(\base{x})+\beta_0,\ \sigma^2)$, we get
\begin{align}
F(\alpha|\base{x};\hat{\theta}_n) = \frac{1}{2}\left[1+\erf\left(\frac{\ln\alpha-\hat{\beta}^\top\phi(\base{x})-\hat{\beta_0}}{\hat{\sigma}\sqrt{2}}\right)\right],
\end{align}
where $\hat{\theta}=(\hat{\beta},\hat{\beta}_0,\hat{\sigma})$.

From the proof of \Cref{lemma:continuity}, the mapping $F(\alpha|\base{x};\theta)$ (for any given pair of values $\{\alpha,\base{x}\}$) is continuous on $\Theta$ and with $p(\theta_0\in\Theta)=1$, then by the continuous mapping theorem \citep{van2000asymptotic} we get
\begin{align}
\big(\hat{\theta}_n\xrightarrow{p}\theta_0\big) \Rightarrow \big(F(\alpha|\base{x};\hat{\theta}_n)\xrightarrow{p}F(\alpha|\base{x};\theta_0)\big) \quad \forall (\alpha,\base{x})\in\mathbb{R}^+\times\mathcal{X}
.
\end{align}

Similarly, $p(\stra{x}\mid\recset,\base{x},\base{T}=0;\theta)$ is continuous on $\Theta$ because the other multiplicative terms are constant w.r.t. $\theta$. We then obtain consistency of $p(\stra{x}\mid\recset,\base{x},\base{T}=0;\hat{\theta}_n)$.

\section{Strategy-Robust Doubly Robust Estimator}
We use $\pi$ to denote the evaluation policy, $\pi_0$ the logging policy, and $\hat{\mu}(\stra{x},\stra{t})$ the regression-based estimator for the conditional expected outcome $\mathbb{E}[Y|\stra{x},\stra{t}]$.
\begin{align}
\hat{V}_\text{SDR}(\pi) &= \hat{V}_\text{S-IPS-res}(\pi) + \hat{V}_\text{S-DM}(\pi)
\\
&= \frac{1}{m}\sum_{i=1}^m\big(y_i-\hat{\mu}(\stra{x}_i,\stra{t}_i)\big) \frac{p_\pi(\stra{t}_i|\stra{x}_i,\base{t}_i,\base{x}_i)}{p_{\pi_0}(\stra{t}_i|\stra{x}_i,\base{t}_i,\base{x}_i)} \frac{\hat{p}_\pi(\stra{x}_i|\base{t}_i,\base{x}_i)}{\hat{p}_{\pi_0}(\stra{x}_i|\base{t}_i,\base{x}_i)} \frac{p_\pi(\base{t}_i|\base{x}_i)}{p_{\pi_0}(\base{t}_i|\base{x}_i)}
\\
&\hspace{10mm}
+ \sum_{\mathcal{X}^2\times\mathcal{T}^2} \hat{\mu}(\stra{x},\stra{t}) p_\pi(\stra{t}|\stra{x},\base{t},\base{x}) \hat{p}_\pi(\stra{x}|\base{t},\base{x}) p_\pi(\base{t}|\base{x}) \hat{p}(\base{x})
.
\end{align}

Note that we assume the above important weights are well-defined on the observed samples, which is mild because both the samples and the denominator terms are obtained under the same logging policy $\pi_0$. 

For ease of presentation, we define the following terms:
\begin{align}
w(\stra{t},\stra{x},\base{t},\base{x}) &:= \frac{p_\pi(\stra{t}|\stra{x},\base{t},\base{x})}{p_{\pi_0}(\stra{t}|\stra{x},\base{t},\base{x})} \frac{{p}_\pi(\stra{x}|\base{t},\base{x})}{{p}_{\pi_0}(\stra{x}|\base{t},\base{x})} \frac{p_\pi(\base{t}|\base{x})}{p_{\pi_0}(\base{t}|\base{x})}
,\\
\hat{w}(\stra{t},\stra{x},\base{t},\base{x}) &:= \frac{p_\pi(\stra{t}|\stra{x},\base{t},\base{x})}{p_{\pi_0}(\stra{t}|\stra{x},\base{t},\base{x})} \frac{\hat{p}_\pi(\stra{x}|\base{t},\base{x})}{\hat{p}_{\pi_0}(\stra{x}|\base{t},\base{x})} \frac{p_\pi(\base{t}|\base{x})}{p_{\pi_0}(\base{t}|\base{x})}
.
\end{align}
We then use $w_i$ and $\hat{w}_i$ to refer to $w(\stra{t}_i,\stra{x}_i,\base{t}_i,\base{x}_i)$ and $\hat{w}(\stra{t}_i,\stra{x}_i,\base{t}_i,\base{x}_i)$, respectively.

In addition, we define the estimated densities:
\begin{align}
\hat{p}_\pi(\stra{t},\stra{x},\base{t},\base{x}) &:=  p_\pi(\stra{t}|\stra{x},\base{t},\base{x})\ \hat{p}_\pi(\stra{x}|\base{t},\base{x})\ p_\pi(\base{t}|\base{x})\ \hat{p}(\base{x})
,\\
\hat{p}_\pi(\stra{t},\stra{x}) &:= \sum_{\mathcal{X}\times\mathcal{T}} \hat{p}_\pi(\stra{t},\stra{x},\base{t},\base{x}),
\end{align}
and let $\hat{P}_{\stra{T},\stra{X};\pi}$ denote the distribution that corresponds to the density $\hat{p}_\pi(\stra{t},\stra{x})$. 

We rewrite the SDR estimator as
\begin{align}
\hat{V}_\text{SDR}(\pi) &= \hat{V}_\text{S-IPS-res}(\pi) + \hat{V}_\text{S-DM}(\pi)
\\
&= \frac{1}{m}\sum_{i=1}^m\big(y_i-\hat{\mu}(\stra{x}_i,\stra{t}_i)\big) \hat{w}_i
+ \sum_{\mathcal{X}^2\times\mathcal{T}^2} \hat{\mu}(\stra{x},\stra{t}) \hat{p}_\pi(\stra{t},\stra{x},\base{t},\base{x})
\\
&= \frac{1}{m}\sum_{i=1}^m\big(y_i-\hat{\mu}(\stra{x}_i,\stra{t}_i)\big) \hat{w}_i
+ \sum_{\mathcal{X}\times\mathcal{T}} \hat{\mu}(\stra{x},\stra{t}) \hat{p}_\pi(\stra{t},\stra{x})
\\
&= \frac{1}{m}\sum_{i=1}^m\big(y_i-\hat{\mu}(\stra{x}_i,\stra{t}_i)\big) \hat{w}_i
+ \mathbb{E}_{\hat{P}_{\stra{T},\stra{X};\pi}}\left[\hat{\mu}(\stra{X},\stra{T})\right]
.
\end{align}

\section{Consistency of $\hat{V}_\text{SDR}$}
We show the double robustness property for $\hat{V}_\text{SDR}$. When $\hat{p}_\pi(\stra{t},\stra{x},\base{t},\base{x})$ is consistent (for any $\pi$, including the case $\pi=\pi_0$), the nuisance components (i.e., $\hat{w}$, $\hat{\mu}$) are estimated independently, and the samples for $\hat{V}_\text{S-IPS-res}$ are collected separately, $\hat{V}_\text{SDR}$ is consistent if either $\hat{\mu}(\stra{x},\stra{t})$ is consistent or \Cref{assumption:overlap} (overlap) holds (i.e., $w(\cdot)$ is well defined).

\subsection{When $\hat{\mu}(\stra{x},\stra{t})$ Is Consistent}

We inject the term $\mu(\stra{x}_i,\stra{t}_i)$ into $\hat{V}_\text{SDR}(\pi)$ as follows:
\begin{align}
\hat{V}_\text{SDR}(\pi) &= \frac{1}{m}\sum_{i=1}^m\big(y_i-\hat{\mu}(\stra{x}_i,\stra{t}_i)\big) \hat{w}_i
+ \mathbb{E}_{\hat{P}_{\stra{T},\stra{X};\pi}}\left[\hat{\mu}(\stra{X},\stra{T})\right]
\\
&= \underbrace{\frac{1}{m}\sum_{i=1}^m\big(y_i-\mu(\stra{x}_i,\stra{t}_i)\big) \hat{w}_i}_{A}
+
\underbrace{\frac{1}{m}\sum_{i=1}^m\big(\mu(\stra{x}_i,\stra{t}_i)-\hat{\mu}(\stra{x}_i,\stra{t}_i)\big) \hat{w}_i}_{B}
+ \underbrace{\mathbb{E}_{\hat{P}_{\stra{T},\stra{X};\pi}}\left[\hat{\mu}(\stra{X},\stra{T})\right]}_{C}
.
\end{align}

When $\hat{\mu}(\stra{x},\stra{t})$ and $\hat{p}_\pi(\stra{t},\stra{x},\base{t},\base{x})$ are consistent estimators, then by the continuous mapping theorem \citep{van2000asymptotic}, $C\xrightarrow{p}\mathbb{E}_{P_{\stra{T},\stra{X};\pi}}[\mu(\stra{X},\stra{T})]$, which means $C\xrightarrow{p}V(\pi)$.

When $\hat{w}$ is a fixed function (which can be achieved by estimated from a separate data set), we can use the law of large numbers to show that $A\xrightarrow{p}0$. In particular, we have
\begin{align}
&\mathbb{E}_{g_0}\left[\big(Y-\mu(\stra{X},\stra{T})\big)\hat{w}(\stra{T},\stra{X},\base{T},\base{X})\right]
\\
=\ &\mathbb{E}_{g_0}\left[\mathbb{E}\left[\big(Y-\mu(\stra{X},\stra{T})\big)\hat{w}(\stra{T},\stra{X},\base{T},\base{X})\ \big\lvert\ \stra{T},\stra{X},\base{T},\base{X} \right]\right]
\\
=\ &\mathbb{E}_{g_0}\Big[\underbrace{\mathbb{E}\big[Y-\mu(\stra{X},\stra{T})\ \big\lvert\ \stra{T},\stra{X},\base{T},\base{X} \big]}_{=0\text{ from the definition of }\mu} \hat{w}(\stra{T},\stra{X},\base{T},\base{X}) \Big]
\\
=\ &0,
\end{align}
where $\hat{w}(\stra{T},\stra{X},\base{T},\base{X})$ can be moved outside of the conditional expectation term because $\hat{w}(\stra{T},\stra{X},\base{T},\base{X})$ is a constant when conditioned on $\{\stra{T},\stra{X},\base{T},\base{X}\}$, which comes from the fact that $\hat{w}$ is a non-random function.

As the above expectation exists, we can use the law of large numbers and show that $A\xrightarrow{p}\mathbb{E}_{\pi_0}\left[\big(Y-\mu(\stra{X},\stra{T})\big)\hat{w}(\stra{T},\stra{X},\base{T},\base{X})\right]$, which gives $A\xrightarrow{p}0$.

Similarly, in the B term, when $\hat{\mu}(\stra{x},\stra{t})$ is a consistent estimator, it means $\hat{\mu}(\stra{x},\stra{t})-\mu(\stra{x},\stra{t})\xrightarrow{p}0$. Consequently, this gives us $B\xrightarrow{p}0$ as long as $\hat{w}(\cdot)$ is bounded. This boundedness behaviour of $\hat{w}$ can be obtained when $\hat{w}$ and $\hat{\mu}$ are estimated independently from separate datasets and the space $\mathcal{X}\times\mathcal{T}$ is finite.

Together, we have $\hat{V}_\text{SDR}\xrightarrow{p}V(\pi)$.

\subsection{When Overlap Holds}
We inject the terms $w_i$ into $\hat{V}_\text{SDR}$ as follows:
\begin{align}
\hat{V}_\text{SDR}(g) &= \frac{1}{m}\sum_{i=1}^m\big(y_i-\hat{\mu}(\stra{x}_i,\stra{t}_i)\big) \hat{w}_i
+ \mathbb{E}_{\hat{P}_{\stra{T},\stra{X};\pi}}\left[\hat{\mu}(\stra{X},\stra{T})\right]
\\
&= \frac{1}{m}\sum_{i=1}^m y_i w_i 
+ \frac{1}{m}\sum_{i=1}^my_i\big(\hat{w}_i-w_i\big)
+ \frac{1}{m}\sum_{i=1}^m \hat{\mu}(\stra{x}_i,\stra{t}_i)\big(w_i-\hat{w}_i\big)
\\
&\hspace{54mm} - \frac{1}{m}\sum_{i=1}^m \hat{\mu}(\stra{x}_i,\stra{t}_i)w_i
+ \mathbb{E}_{\hat{P}_{\stra{T},\stra{X};\pi}}\left[\hat{\mu}(\stra{X},\stra{T})\right]
.
\end{align}

We further denote the following for ease of presentation:
\begin{align}
A &:= \frac{1}{m}\sum_{i=1}^m y_i w_i, \\
B &:= \frac{1}{m}\sum_{i=1}^my_i\big(\hat{w}_i-w_i\big), \\
C &:= \frac{1}{m}\sum_{i=1}^m \hat{\mu}(\stra{x}_i,\stra{t}_i)\big(w_i-\hat{w}_i\big), \\
D &:= -\frac{1}{m}\sum_{i=1}^m \hat{\mu}(\stra{x}_i,\stra{t}_i)w_i 
+ \mathbb{E}_{\hat{P}_{\stra{T},\stra{X};\pi}}\left[\hat{\mu}(\stra{X},\stra{T})\right],
\end{align}
where $\hat{V}_\text{SDR}=A+B+C+D$ and we will show that $A\xrightarrow{p}V(\pi)$ while the remaining terms converge to $0$.

Using the law of large numbers, we have $A\xrightarrow{p}\mathbb{E}_{\pi_0}[Yw(\stra{T},\stra{X},\base{T},\base{X})]$, where \[\mathbb{E}_{\pi_0}[Yw(\stra{T},\stra{X},\base{T},\base{X})]=\mathbb{E}_\pi[Y]=V(\pi)\] if the overlap assumption holds.

When $\hat{w}_i$ is a consistent estimator of $w_i$, by using the continuous mapping theorem \citep{van2000asymptotic}, we get $B\xrightarrow{p}0$ and $C\xrightarrow{p}0$, as long as the mapping $\hat{\mu}$ is fixed (e.g., by estimating from a separate dataset).

When the mapping $\hat{\mu}$ is fixed, by the law of large numbers, the first term in $D$ converges to $\mathbb{E}_{P_{\stra{T},\stra{X};\pi}}[-\hat{\mu}(\stra{X},\stra{T})]$. When $\hat{p}_\pi(\stra{t},\stra{x},\base{t},\base{x})$ is consistent, the second term in $D$ converges to $\mathbb{E}_{P_{\stra{T},\stra{X};\pi}}[\hat{\mu}(\stra{X},\stra{T})]$. Thus, together $D\xrightarrow{p}0$.

Putting everything together, $\hat{V}_\text{SDR}(\pi)\xrightarrow{p}V(\pi)$.

\end{document}